\renewcommand{\cite}{\citep}
\newcommand{\cm}{\ding{51}}%
\newcommand{\xm}{\ding{55}}%
\definecolor{kjgray}{rgb}{.7,.7,.7}
\definecolor{kjgray}{rgb}{.7,.7,.7}
\renewcommand{\paragraph}{%
  \@startsection{paragraph}{4}%
  {\z@}{0.50ex \@plus 1ex \@minus .2ex}{-1em}%
  {\normalfont\normalsize\bfseries}%
}
\def\ddefloop#1{\ifx\ddefloop#1\else\ddef{#1}\expandafter\ddefloop\fi}
\def\ddef#1{\expandafter\def\csname c#1\endcsname{\ensuremath{\mathcal{#1}}}}
\def\ddef#1{\expandafter\def\csname b#1\endcsname{\ensuremath{{\boldsymbol{#1}}}}}
\def\ddef#1{\expandafter\def\csname h#1\endcsname{\ensuremath{\hat{#1}}}}
\def\ddef#1{\expandafter\def\csname hc#1\endcsname{\ensuremath{\widehat{\mathcal{#1}}}}}
\def\ddef#1{\expandafter\def\csname t#1\endcsname{\ensuremath{\tilde{#1}}}}
\def\ddef#1{\expandafter\def\csname wt#1\endcsname{\ensuremath{\widetilde{#1}}}}
\def\ddef#1{\expandafter\def\csname r#1\endcsname{\ensuremath{\mathring{#1}}}}
\def\ddef#1{\expandafter\def\csname bar#1\endcsname{\ensuremath{\bar{#1}}}}
\def\ddef#1{\expandafter\def\csname wbar#1\endcsname{\ensuremath{\overline{#1}}}}
\def\ddef#1{\expandafter\def\csname tc#1\endcsname{\ensuremath{\widetilde{\mathcal{#1}}}}}
\DeclareMathOperator{\EE}{\mathbb{E}}
\DeclareMathOperator{\PP}{\mathbb{P}}
\DeclareMathOperator{\one}{\mathds{1}\hspace{-.18em}}
\DeclareMathOperator{\Reg}{{\mathsf{Reg}}}
\def\RR{{\mathbb{R}}}
\newcommand{\sr}[2]{ {\stackrel{#1}{#2}} }
\newcommand{\fr}[2]{ { \frac{#1}{#2} }}
\def\lt{\left}
\def\rt{\right}
\def\larrow{\ensuremath{\leftarrow}\xspace} 
\def\T{\ensuremath{\top}}  
\def\om{{\ensuremath{\omega}\xspace} }
\def\dt{{\ensuremath{\delta}\xspace} }
\def\sm{{\ensuremath{\setminus}\xspace} }
\def\barV{\ensuremath{\overline{V}}\xspace}
\def\lfl{\lfloor} 
\def\rfl{\rfloor}
\def\gam{{\ensuremath{\gamma}\xspace} }
\newcommand{\vast}{\bBigg@{3}}
\newcommand{\Vast}{\bBigg@{4}}
\def\cX{\ensuremath{\mathcal{X}}\xspace} 
\def\la{{\langle}}
\def\ra{{\rangle}}
\def\lam{\ensuremath{\lambda}}
\def\barV{\ensuremath{\overline{{V}}\hspace{0.0em}}\xspace}
\def\til{\widetilde}
\newcommand{\wbar}[1]{ {\ensuremath{\overline{#1}}} }
\def\barbeta{\wbar{\beta}}
\def\barC{\ensuremath{\wbar{C}}}
\def\hth{{\hat{ \theta}}}
\def\lam{{\ensuremath{\lambda}\xspace} }
\def\cO{{\ensuremath{\mathcal{O}}}}
\def\tr{{\ensuremath{\normalfont{\text{tr}}}}}
\def\th{{\ensuremath{\theta}}}
\def\cd{\cdot}
\def\barH{\ensuremath{\wbar{H}}}
\newcounter{textcnt}
\newcommand\addtext[1]{%
  \stepcounter{textcnt}%
  \csgdef{text\thetextcnt}{#1}}
\newcounter{colnum}
\newcounter{Jidx}
\newcommand\dsymhelper[2]{
  \addtext{\hyperlink{#1}{#2}}%
  \blue{\hypertarget{#1}{#2}}%
}
\newcommand\dsym[1]{
  \stepcounter{Jidx}
  \xdef\tmpname{Jsym.\theJidx}
  \expandafter\dsymhelper\expandafter{\tmpname}{#1}
} 
\newcommand\myshade{85}
\colorlet{mylinkcolor}{blue}
\newtheoremstyle{plain}
{3pt}   
{-3pt}   
{\itshape}  
{0pt}       
{\bfseries} 
{.}         
{5pt plus 1pt minus 1pt} 
{}          
\newtheoremstyle{plain2}
{3pt}   
{-3pt}   
{}  
{0pt}       
{\bfseries} 
{.}         
{5pt plus 1pt minus 1pt} 
{}          
\theoremstyle{plain}
\newtheorem{theorem}{Theorem}
\newtheorem{lemma}[theorem]{Lemma}
\newtheorem{corollary}[theorem]{Corollary}
\theoremstyle{plain2}
\newtheorem{remark}{Remark}
\definecolor{kjcolor}{RGB}{46,139,87}
\newcommand{\blue}[1]{{\color[rgb]{.3,.5,1}#1}}
\renewcommand{\blue}[1]{#1}
\newcommand{\removeforfinal}[1]{#1}
\renewcommand{\removeforfinal}[1]{}
\renewcommand{\ln}{\log}
\renewcommand \thepart{}
\renewcommand \partname{}
\begin{document}
  
\doparttoc 
\faketableofcontents 


%

%

\twocolumn[

\aistatstitle{Norm-Agnostic Linear Bandits}

\aistatsauthor{  Spencer (Brady) Gales \And  Sunder Sethuraman  \And Kwang-Sung Jun}

\aistatsaddress{ University of Arizona \And  University of Arizona \And University of Arizona } ]

\begin{abstract}
  Linear bandits have a wide variety of applications including recommendation systems yet they make one strong assumption: the algorithms must know an  upper bound $S$ on the norm of the unknown parameter $\theta^*$ that governs the reward generation.
  Such an assumption forces the practitioner to guess $S$ involved in the confidence bound, leaving no choice but to wish that $\|\theta^*\|\le S$ is true to guarantee that the regret will be low.
  In this paper, we propose novel algorithms that do not require such knowledge for the first time.
  Specifically, we propose two algorithms and analyze their regret bounds: one for the changing arm set setting and the other for the fixed arm set setting.
  Our regret bound for the former shows that the price of not knowing $S$ does not affect the leading term in the regret bound and inflates only the lower order term.
  For the latter, we do not pay any price in the regret for now knowing $S$.
  Our numerical experiments show standard algorithms assuming knowledge of $S$ can fail catastrophically when $\|\theta^*\|\le S$ is not true whereas our algorithms enjoy low regret.
\end{abstract}
\section{INTRODUCTION}

Linear bandits~\cite{abe99associative} have gained popularity since their success in online news recommendation systems~\cite{li10acontextual} that learn from user feedback interactively.
Specifically, at each time step $t$, the learner chooses an arm $x_t\in\RR^d$ from an available pool of arms $\cX_t \subset\{x\in\RR^d: \|x\|\le 1\}$ given from the environment, and then receives a reward $y_t\in \RR$ that is assumed to follow a linear model $y_t  = x_t^\T\th^* + \eta_t$, where $\th^{\ast}$ is unknown and $\eta$ is a zero-mean subGaussian noise. 
This problem is an inductive extension of the classic multi-armed bandit problem~\cite{thompson33onthelikelihood} where the learner faces the dilemma between exploration and exploitation.
The goal of the algorithm is to minimize the (pseudo-)regret:
\begin{align}\label{eq:regret}
    \Reg_T = \sum_{t=1}^T \max_{x\in\cX_t} \la x, \th^*\ra - \la x_t, \th^*\ra~.
\end{align}
We refer to~\citet{lattimore20bandit} for a comprehensive review of bandit algorithms.

Despite the popularity of linear bandits, there is one unrealistic yet commonly-used assumption: the norm of the unknown vector $\th^*$ is bounded by 1 or by an a priori known constant $S$.
Such information is never known in practice.
This places the practitioner in an undesirable situation where they must make a guess on the upper bound $S$ of $\|\th^*\|$ and take a leap of faith that it will be correct. Otherwise, the algorithm does not enjoy any theoretical guarantee.
This is not just a theoretical issue as we show the failure of OFUL numerically in our experiments; see Figure~\ref{fig:expr}(c).
Such weakness of existing linear bandits raises an interesting question: Can we develop linear bandit algorithms that enjoy low regret without knowledge of $S$?

In this paper, we answer this question in the affirmative by proposing two algorithms: NAOFUL (Norm-Agnostic Optimism in the Face of Uncertainty for Linear bandits) for the changing arm set setting, and OLSOFUL (Ordinary Least Squares OFUL) for the fixed arm set setting (i.e., $\cX_t = \cX$ for some $\cX$, $\forall t$).
We show high probability regret bounds up to the time horizon $T$ in Table~\ref{tab:summary} where NAOFUL also has a free parameter $\alpha$ allowing to trade-off between the leading and the lower-order terms.
Taking an absolute constant for $\alpha$ (e.g., 2), both of our algorithms achieve the same order of regret bound, as far as the leading $\sqrt{T}$ term is concerned.
For the lower order term (i.e., scaling poly-logarithmically with $T$), NAOFUL has a slightly larger dependence on $\|\th^*\|$ in the lower-order term while OLSOFUL achieves the same order of regret bound but can only be used in the fixed arm set setting.
This indicates that the price of not knowing anything about $\|\th^*\|$ does not affect the order of the leading term in these regret bounds.

Our algorithms are both simple and computationally efficient.
Particularly, NAOFUL can be viewed as an \textit{embarrassingly simple} remedy for making OFUL adapt to the unknown norm.
Both NAOFUL and OLSOFUL can be implemented in time complexity $\cO(d^2 |\cX_t| + o(1))$ per iteration when the arm set is finite, which is just as efficient as OFUL.
Furthermore, OLSOFUL does not use any regularizer, which means that we do not have to regularize the intercept term and OLSOFUL is able to adapt to an arbitrary additive shift in the reward.
In contrast, such a shift may result in invalidating OFUL's regret bound as it has to regularize the bias term as well. 
We present NAOFUL and OLSOFUL along with their regret bounds in Section~\ref{sec:changing} and~\ref{sec:fixed} respectively.

We perform numerical evaluations to show the efficacy of our methods compared to existing algorithms in Section~\ref{sec:expr} and conclude our paper with future directions in Section~\ref{sec:conclusion}.

\begin{table*}[ht]
{
    \centering
    \begin{tabular}{cccc}\hline
        Algorithms & Norm-agnostic & Regret bound & Arm set  \\\hline
        OFUL       & \xm & $dR\sqrt{T} + S_* d$ & Changing 
    \\  NAOFUL($\alpha$)     & \cm  &  $\alpha dR\sqrt{ T} + \alpha
    dS_*+ S_*^{2/\alpha+1}$  & Changing   
    \\  OLSOFUL    & \cm & $dR\sqrt{T} + S_* d$ & Fixed\\\hline
    \end{tabular}
    \caption{Comparison of OFUL and our algorithms. The regret bounds are expressed order-wise only, ignoring polylogarithmic factors. The regret bound of OFUL here is based on an improved analysis that was not reported in the literature explicitly, which we report in our appendix. For NAOFUL$(\alpha)$, we choose a polynomially decaying regularizer (see Corollary~\ref{cor:changing}(ii)).}
    \label{tab:summary}
    }
\end{table*}

\section{RELATED WORK}

For many online learning problems with bandit feedback, it has been common to assume the knowledge of $S$ such that $\|\th^*\|\le S$~\cite{abe99associative,auer02using,ay11improved}.
Some papers further assume that the mean rewards $\{x_t^\T\th^*\}$ are in a known interval such as $[-1,1]$~\cite{ay11improved}.
The assumption on the norm is highly unrealistic, yet researchers mostly left them untouched except for a few studies such as \citet[Section 4]{orabona11better}, \citet[Theorem 4]{gentile14onmultilabel}, and \citet[Remark 9]{kocak20spectral}.
These works, however, result in an exponential dependency on $\|\th^*\|$ in the lower-order term of their performance bound, which is undesirable.
For the fixed arm set setting, \citet{dani08stochastic} does not assume knowledge of $S$ but does assume that the mean rewards are known to be in $[-1,1]$.

One may attempt to employ model selection to adapt to the unknown norm of $\theta^*$.
\cite{ghosh2021problem} consider the $K$ arm mixture bandit algorithm with bias. They do not consider a bound on $\|\th^*\|$ nor expected rewards. Using a model selection approach, to estimate an upper bound on $\|\theta^*\|$, they incur $\tilde{\cO}((1+\|\th^*\|)(\sqrt{K}+\sqrt{d})\sqrt{T}).$ \cite{wang2021neural} estimate the magnitude of a data dependent complexity term (similar to $\|\theta^*\|$) required in there deep active learning algorithm. 
However, these works consider quite a different problem setup from ours.
Furthermore, it is unclear if the model selection approach will enjoy the same bound as ours since many recent results on model selection in bandits indicate that the cost of adaptation is nontrivial and results in a nontrivial factor to the leading term of the regret bound~\cite{zhu20regret,zhu21pareto}.

Algorithms with Bayesian regret guarantees do not require the knowledge of the unknown norm of $\theta^*$~\cite{russo14learning}.
However, these algorithms in fact require a stronger assumption than knowing the norm: they typically assume the knowledge of the prior over the unknown $\theta^*$ to enjoy a valid regret bound.
We remark that our norm-agnostic algorithms still enjoy the claimed regret bound even in the Bayesian environment where $\th^*$ is drawn from a prior distribution, without the knowledge of the prior.

In sum, our work is the first work in the literature to not assume knowledge of $S$ for the changing arm setting nor the range of the mean rewards without introducing an exponential dependence on $\|\th^*\|$, to our knowledge.

\section{BACKGROUNDS} \label{sec:backgrounds}
Let us now formally define the linear bandit problem with changing arm set.
At each time step $t$, the learner chooses an arm $x_t\in\cX_t\subset\RR^d$ from an available pool of arms $\cX_t\subset\{x\in\RR^d: \|x\|\le 1\}$ given from the environment and then receives a reward $y_t\in \RR$ that we assume to have the following structure $y_t  = x_t^\T\th^* + \eta_t$ where $\theta^*\in\RR^d$ is an unknown parameter and $\eta_t$ is a zero-mean $R^2$-sub-Gaussian noise conditioned on all the previous observations; i.e., $\forall v, \EE[\exp(v \eta_t) \mid x_{1:t}, \eta_{1:t-1}] \le \exp(v^2 R^2/2)$ where we denote $z_1,...,z_t$ by $z_{1:t}$.

\textbf{Assumptions on the arm sets.}
Our theoretical bounds will still be valid when $\cX_t$ depends on anything that happens up to the end of the time step $t-1$.
However, such a generic setting makes an interpretability issue on the regret (Eq.~\eqref{eq:regret}) since the comparator $\max_{x\in\cX_t} x^\T\th^*$ in the definition of regret is now dependent on the algorithm's behavior. 
To avoid any confusion, we assume that, for each $t\ge 1$, the arm set $\cX_t$ is determined by the environment before the game begins and revealed to the learner at the beginning of the time step $t$.
Note that the fixed arm set setting is a simpler setup where $\cX_t = \cX, \forall t$ and $\cX$ is known to the learner before the game starts.

\textbf{Notations.}
Define $S_* = \|\th^*\|$.
The order symbol $\tilde\cO$ ignores polylogarithmic factors.
For the matrix $A\in\RR^{d\times d}$, define $|A|$ to be the determinant of $A.$ For the set $\cX\subset \RR^d$, define $|\cX|$ to be the cardinality of $\cX$. 

\textbf{OFUL.}
One of the most popular and performant linear bandit algorithm is OFUL~\cite{ay11improved}.
We present OFUL in Algorithm~\ref{alg:oful} where
\begin{align}
    V_t(\lambda)=\sum_{s=1}^t x_sx_s^{\T}+\lambda I~.
\end{align}
OFUL maintains a regularized MLE 
\begin{align}\label{eq:mle}
    \hth_t = \arg\min_{\th} \sum_{s=1}^t \fr12(y_s - x_s^\T\th)^2 + \frac{\lam}{2} \|\th\|^2
\end{align}
and a confidence set around it.
OFUL then chooses an arm by the celebrated optimism principle (Eq.~\eqref{eq:optimism}) where we choose an arm that maximizes the upper confidence bound on its mean reward.
The original guarantee of OFUL along with proper tuning of $\lam$ is as follows:
\begin{theorem}[Theorem 3 of \cite{ay11improved}] \label{thm:oful}
Assume that the hyperparameter $S$ of OFUL (Algorithm~\ref{alg:oful}) satisfies $S_* \le S$.
If $\langle x, \th^*\rangle \in [-1,1], \forall x\in\cX_t, t\in[T]$, then with $\lambda=\frac{R}{S^2}$, OFUL satisfies, w.p. at least $1-\dt$,
\begin{equation}\label{OFUL_reg_bound}
    \Reg_T = \tilde{\cO}(dR\sqrt{T}).
\end{equation}
\end{theorem}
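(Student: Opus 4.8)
The plan is to run the standard optimism-based regret analysis for linear bandits, which decomposes into three ingredients---a self-normalized confidence ellipsoid, the optimism inequality, and the elliptical-potential (log-determinant) lemma---and then to specialize the confidence radius by substituting $\lambda=R/S^2$ and check that $S$ survives only inside logarithms.

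First I would establish, on an event $\cE$ of probability at least $1-\dt$, that $\th^*$ lies (simultaneously over $t$) in the confidence ellipsoid centered at the regularized MLE $\hth_{t-1}$ with respect to the design matrix $V_{t-1}:=V_{t-1}(\lambda)$, with radius $\beta_{t-1}=R\sqrt{2\log(1/\dt)+d\log(1+(t-1)/(\lambda d))}+\sqrt{\lambda}\,S$. This is the self-normalized martingale tail bound of \citet{ay11improved} (proved via a method-of-mixtures supermartingale), combined with $\|x_s\|\le 1$; the hypothesis $S_*\le S$ is precisely what makes the deterministic bias term $\sqrt{\lambda}\,\|\th^*\|$ no larger than $\sqrt{\lambda}\,S$. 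On $\cE$, writing $x_t^\star\in\argmax_{x\in\cX_t}\la x,\th^*\ra$ and $\tiltheta_t$ for the optimistic parameter selected alongside $x_t$, optimism gives $\la x_t,\tiltheta_t\ra\ge\la x_t^\star,\th^*\ra$, so the instantaneous regret obeys $r_t:=\la x_t^\star,\th^*\ra-\la x_t,\th^*\ra\le\la x_t,\tiltheta_t-\th^*\ra\le\|x_t\|_{V_{t-1}^{-1}}\big(\|\tiltheta_t-\hth_{t-1}\|_{V_{t-1}}+\|\hth_{t-1}-\th^*\|_{V_{t-1}}\big)\le 2\beta_{t-1}\|x_t\|_{V_{t-1}^{-1}}$.

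Next, the assumption $\la x,\th^*\ra\in[-1,1]$ enters to clip the per-step regret: $r_t\le 2$ deterministically, hence $r_t\le 2\min\{1,\beta_{t-1}\|x_t\|_{V_{t-1}^{-1}}\}\le 2\beta_T\min\{1,\|x_t\|_{V_{t-1}^{-1}}\}$, using monotonicity of $\beta$ and $\beta_T\ge 1$ (which we may assume without loss of generality). A Cauchy--Schwarz step then gives $\Reg_T=\sum_{t=1}^T r_t\le 2\beta_T\sqrt{T\sum_{t=1}^T\min\{1,\|x_t\|_{V_{t-1}^{-1}}^2\}}$, and the elliptical-potential lemma bounds $\sum_{t=1}^T\min\{1,\|x_t\|_{V_{t-1}^{-1}}^2\}\le 2\log\frac{|V_T(\lambda)|}{|\lambda I|}\le 2d\log\!\big(1+\tfrac{T}{\lambda d}\big)$.

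Finally I substitute $\lambda=R/S^2$: the bias term becomes $\sqrt{\lambda}\,S=\sqrt{R}$, which is dominated by the variance term $R\sqrt{d\log(\cdot)}$ in the $\tilde\cO$ (in the regime $Rd=\Omega(1)$, which is where the interesting scaling lives), while $1+T/(\lambda d)=1+TS^2/(Rd)$ places $S$ inside the logarithm. Hence $\beta_T=\tilde\cO(R\sqrt{d})$ and $\Reg_T\le 2\beta_T\sqrt{2dT\log(1+TS^2/(Rd))}=\tilde\cO(dR\sqrt{T})$, as claimed. The only genuinely technical ingredient is the self-normalized tail bound of the first paragraph, which I would cite rather than reprove; everything else is bookkeeping, and the single point that needs care is verifying that the choice $\lambda=R/S^2$ leaves the dependence on $S$ purely logarithmic rather than polynomial.
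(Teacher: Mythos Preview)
Your argument is precisely the classical proof from \citet{ay11improved}, and indeed the paper does not supply its own proof of this statement---it is quoted verbatim as Theorem~3 of that reference. In that sense you have matched the ``paper's proof.''

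However, the paper immediately goes on to critique this very argument (see the discussion right after Theorem~\ref{thm:oful} and Appendix~\ref{sec:error}), and the critique lands exactly on the step you dismiss as ``$\beta_T\ge 1$ (which we may assume without loss of generality).'' In the paper's notation your $\beta_{t-1}$ is their $\sqrt{\beta_{t-1}}$, and the inequality
\[
\min\{1,\sqrt{\beta_{t-1}}\,\|x_t\|_{V_{t-1}^{-1}}\}\ \le\ \sqrt{\beta_{t-1}}\,\min\{1,\|x_t\|_{V_{t-1}^{-1}}\}
\]
is false when $\sqrt{\beta_{t-1}}<1$, and this is \emph{not} a harmless normalization. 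The paper gives a concrete witness: take $R=0$, $\cX_t=\{e_1,\ldots,e_d\}$, $\th^*=S_*\,e_d$. Then the confidence radius collapses, yet OFUL (with adversarial tie-breaking) pulls $e_1,\ldots,e_{d-1}$ in the first $d-1$ rounds and incurs $(d-1)S_*$ regret---a quantity invisible in $\tilde{\cO}(dR\sqrt{T})=\tilde{\cO}(0)$. The upshot is that the stated bound is missing an additive $\Omega(dS_*)$ term, and your ``without loss of generality'' is precisely where this leaks. (The paper notes that \citet{lattimore20bandit} sidestep the issue by fixing $R=1$.)

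So: you have faithfully reproduced the cited proof, but be aware that the paper's whole point in citing it is to flag that the bound is slightly incorrect, and the step you hand-wave is the one they identify as the culprit.
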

Throughout the paper, all the proofs are deferred to the appendix unless noted otherwise.
We remark that we hide the dependency on $\dt$ to reduce clutter but all the regret bounds in our paper have a multiplicative factor of $\sqrt{\log(1/\dt)}$ in the leading term.

However, Theorem~\ref{thm:oful} is in fact missing an additive lower order term $\Omega(d S^*)$, which we elaborate in Appendix~\ref{sec:error} along with a minor error in~\citet{ay11improved}.
The key idea is that even if there is no noise (i.e., $R=0$), there exists a problem where OFUL is forced to pulling $d-1$ suboptimal arms for the first $d-1$ iterations, incurring $S^*$ instantaneous regret at each time step.

Furthermore, the assumption in Theorem~\ref{thm:oful} that $\la x,\th^*\ra \in [-1,1]$ is restrictive and hard to verify in practice.
One can drop such an assumption and prove a regret bound using the same proof structure as for OFUL, giving a regret bound of $\tcO((1+S_*) dR\sqrt{T})$ in terms of an extra factor of $1+S_*$ which seems restrictive.
In fact, there exists a novel technique sketched in~\citet[Exercise 19.3]{lattimore20bandit} that removes such a factor.
\begin{theorem}
    Assume that the hyperparameter $S$ of OFUL (Algorithm~\ref{alg:oful}) satisfies $S_* \le S$.
    Let $\lambda=\frac{R}{S^2}$.
    Then, OFUL satisfies, w.p. at least $1-\dt$,
  \begin{align*}
      \Reg_T = \tilde{\cO}(dR\sqrt{T} + S_* d) 
  \end{align*}
  where the dependence on $S$ is logarithmic and thus omitted.
\end{theorem}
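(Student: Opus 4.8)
The plan is to run the standard optimism‑in‑the‑face‑of‑uncertainty analysis of OFUL, but to isolate the handful of rounds in which the usual elliptical‑potential bound fails (because the regularizer $\lam = R/S^2$ may be much smaller than $1$), following the device sketched in \citet[Exercise 19.3]{lattimore20bandit}. First I would invoke the self‑normalized concentration bound of \citet{ay11improved}: with probability at least $1-\dt$, for all $t$ one has $\|\hth_t - \th^*\|_{V_t(\lam)} \le \beta_t$ with $\beta_t = R\sqrt{2\log(1/\dt) + \log(|V_t(\lam)|/\lam^d)} + \sqrt\lam S$, where the assumption $S_* \le S$ enters only to bound the bias term $\sqrt\lam\|\th^*\| \le \sqrt\lam S$. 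Conditioned on this event, the optimism step gives the per‑round bound $r_t := \max_{x\in\cX_t}\la x,\th^*\ra - \la x_t,\th^*\ra \le 2\beta_{t-1}\|x_t\|_{V_{t-1}(\lam)^{-1}}$, and, since every admissible arm satisfies $\|x\|\le 1$, we also always have the trivial bound $r_t \le 2\|\th^*\| = 2S_*$. This second bound is what plays the role that $r_t \le 2$ plays when mean rewards are assumed to lie in $[-1,1]$, and is the reason the dropped assumption only affects the lower‑order term.

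Next I would split $[T]$ into the ``good'' rounds $\cG = \{t : \|x_t\|_{V_{t-1}(\lam)^{-1}} \le 1\}$ and the ``bad'' rounds $\cB = \{t : \|x_t\|_{V_{t-1}(\lam)^{-1}} > 1\}$. On $\cB$, the determinant identity $|V_t(\lam)| = |V_{t-1}(\lam)|\,(1 + \|x_t\|_{V_{t-1}(\lam)^{-1}}^2)$ shows that each bad round at least doubles $|V_\cdot(\lam)|$; combined with monotonicity of the determinant and the crude bound $|V_T(\lam)| \le (\lam + T/d)^d$ from AM--GM on the eigenvalues, this forces $|\cB| \le d\log_2(1 + T/(d\lam))$, which is $\tilde\cO(d)$ since $\lam = R/S^2$ enters only logarithmically. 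Hence the regret collected on $\cB$ is at most $2S_*|\cB| = \tilde\cO(S_* d)$. On $\cG$, since $u \le 2\log(1+u)$ for $u\in[0,1]$, the elliptical‑potential telescoping $\sum_t \log(1 + \|x_t\|_{V_{t-1}(\lam)^{-1}}^2) = \log(|V_T(\lam)|/\lam^d) \le d\log(1 + T/(d\lam))$ yields $\sum_{t\in\cG}\|x_t\|_{V_{t-1}(\lam)^{-1}}^2 \le 2d\log(1+T/(d\lam))$, so by Cauchy--Schwarz $\sum_{t\in\cG} r_t \le 2\beta_T \sqrt{|\cG|\cdot 2d\log(1+T/(d\lam))} = \tilde\cO(\beta_T\sqrt{dT})$.

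Finally I would substitute $\lam = R/S^2$, which collapses the bias term to $\sqrt\lam S = \sqrt R$ and leaves $\beta_T = \tilde\cO(R\sqrt d)$ with only logarithmic dependence on $S$, so that $\sum_{t\in\cG} r_t = \tilde\cO(dR\sqrt T)$; adding the two pieces gives $\Reg_T = \tilde\cO(dR\sqrt T + S_* d)$. The step I expect to require the most care is precisely the $\cG$/$\cB$ split and the accounting for $\cB$: because the regularizer is tiny, $\|x_t\|_{V_{t-1}(\lam)^{-1}}$ is not automatically bounded by $1$ and the textbook elliptical‑potential lemma does not apply directly, so one has to argue that the offending rounds are few (logarithmically many, via the determinant‑doubling argument) and individually cheap (each costs only $2S_*$). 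Everything else — the self‑normalized concentration bound, the optimism lemma, monotonicity of $\beta_t$, and the determinant estimates — is routine.
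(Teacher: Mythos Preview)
Your proposal is correct and follows essentially the same route the paper takes: the paper does not write out a separate proof of this theorem but attributes the technique to \citet[Exercise 19.3]{lattimore20bandit} and then reuses exactly your good/bad split (their $H_T$ and $\barH_T$) in the proof of Theorem~\ref{thm:naoful}, bounding bad rounds by $2S_*|H_T|$ via the elliptical potential count and good rounds by Cauchy--Schwarz plus the standard elliptical potential lemma. The only refinement worth noting is that the paper's Lemma~\ref{lem:epc} bounds $|\cB|$ by $\tfrac{2d}{\log 2}\log\!\bigl(1+\tfrac{1}{\lam\log 2}\bigr)$, i.e.\ independently of $T$, by restricting the AM--GM upper bound on the determinant to the bad rounds only; your determinant-doubling argument against $|V_T(\lam)|\le(\lam+T/d)^d$ gives a bound that carries an extra $\log T$, but this is absorbed by the $\tilde\cO$ anyway.
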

This matches the lower bound~\cite{dani08stochastic}, as far as the leading term is concerned.
The lower order term of $S_* d$ is not restrictive -- one can extend the argument described in Appendix~\ref{sec:error} to any algorithm.
That is, one can show that there exists a set of problems where any algorithm needs to make at least $\Theta(d)$ suboptimal arm pulls before identifying the best arm in one of those problems where each suboptimal arm pulls incur an instantaneous regret of $S_*$.

\begin{algorithm}[t]
  \caption{OFUL~\cite{ay11improved}}
  \label{alg:oful}
  \begin{algorithmic}[1]
    \STATE \textbf{Input:} $S>0$, $\lam>0$
    \FOR{$t=1,2,\ldots$}
    \STATE Compute $\hth_{t-1}$ by~Eq.~\eqref{eq:mle}.
    \STATE $\sqrt{\beta_{t-1}(\lam)} 
        :=         R\sqrt{\log\lt(\frac{{|V_{t-1}(\lam)|}}{{|\lambda I|}\delta^2}\rt)}+\sqrt{\lam}S$.
    \STATE Pull 
    \begin{align}\label{eq:optimism}
        x_t = \arg \max_{x\in\cX_t} \max_{\th\in C_{t-1}} \la x, \th\ra 
    \end{align} 
     where
    \begin{align*}
        C_{t-1} = \Big\{ \th\in\RR^d&: \|\hat{\theta}_{t-1}-\th\|_{V_{t-1}(\lam)}\leq \sqrt{\beta_{t-1}(\lam)} 
        \Big\}~.
    \end{align*}
    \STATE Receive reward $y_t\in\RR^d$.
    \ENDFOR
  \end{algorithmic}
\end{algorithm} 

\begin{algorithm}[t]
  \caption{NAOFUL (Norm-Agnostic OFUL)}
  \label{alg:naoful}
  \begin{algorithmic}[1]
    \STATE \textbf{Input:} a nonincreasing sequence $\{\lambda_t>0\}_{t\ge 1}$
    \STATE Let $\delta_t=\frac{1}{t^2}\frac{6}{\pi^2}>0, \forall t \ge 1$.
    \FOR{$t=1,2,\ldots$}
    \STATE Compute $\hth_{t-1}$ by~Eq.~\eqref{eq:mle}.
    \STATE $\sqrt{\barbeta_{t-1}} := R\sqrt{\log\lt(\frac{{|V_{t-1}(\lambda_t)|}}{{|\lambda_t I|}{\delta_t^2}}\rt)}+R\sqrt{d}$
    \STATE Pull $x_t = \arg \max_{x\in\cX_t} \max_{\th\in \barC_{t-1}} \la x, \th\ra $ where
    \begin{align}
        \barC_{t-1} = \cbr{ \th\in\RR^d: \|\hat\theta_{t-1}-\th\|_{V_{t-1}(\lambda_t)}\leq
        \sqrt{\barbeta_{t-1}} }~.
    \end{align}
    \STATE Receive reward $y_t\in\RR^d$.
    \ENDFOR
  \end{algorithmic}
\end{algorithm} 

\section{THE CHANGING ARM SET}
\label{sec:changing}

To lift the common assumption of knowing a bound $S$ on $\|\th^*\|$, we propose a new algorithm called NAOFUL (Norm Agnostic OFUL) described in Algorithm~\ref{alg:naoful}.
Our algorithm makes two simple adjustments from OFUL: introduction of (i) time dependent regularizer $\lam_t$ and (ii) a novel confidence width $\sqrt{\barbeta_t}$.
We emphasize that $\sqrt{\barbeta_t}$ does not involve $S$ or $S_*$, which means that the algorithm does not require any knowledge on $\th^*$.
One can show that to obtain a valid version of the confidence ellipsoid for $\th^*$ at any time step $t$, one needs to use $\sqrt{\lam_t} S_*$ in place of $R\sqrt{d}$ in the definition of $\sqrt{\barbeta_{t-1}}$.
The key idea is that there exists a critical time step $t_0$ after which $\sqrt{\lam} S_* \le R\sqrt{d}$ becomes true so that the confidence ellipsoid is true.
This leaves us to analyze the regret for a time-varying regularizer and the regret before the time step $t_0$.
NAOFUL enjoys the following regret bound.
\begin{theorem}\label{thm:naoful}
    Suppose we run \textup{NAOFUL} (Algorithm~\ref{alg:naoful}) with a nonincreasing sequence $\{\lam_t > 0\}_{t\ge1}$.
    Let $t_0:=\min\{t\ge1: \lambda_t^{1/2}S_* \le R\sqrt{d}\}$.
    Then, for any $T\ge1$, with probability at least $1-\delta$,
    \[\Reg_T \!=\! \cO\del{t_0S_* \!+\! R\sqrt{T }\cd d\log\del{\frac{T}{\lam_T}} \!+\! S_* d\log\left(\frac{1}{\lambda_T}\right)}\]
\end{theorem}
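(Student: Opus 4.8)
The plan is to decompose the regret into three regimes and handle each separately, following the standard OFUL skeleton but being careful about the time-varying regularizer and the "warm-up" phase before the confidence ellipsoid becomes valid. First I would establish the key probabilistic fact: by a self-normalized martingale (Abbasi-Yadkori--P\'al--Szepesv\'ari) concentration argument, for each fixed $t$ the ellipsoid $\{\th: \|\hth_{t-1}-\th\|_{V_{t-1}(\lam_t)} \le R\sqrt{\log(|V_{t-1}(\lam_t)|/(|\lam_t I|\delta_t^2))} + \sqrt{\lam_t}\,S_*\}$ contains $\th^*$ with probability at least $1-\delta_t$; a union bound over $t\ge 1$ using $\sum_t \delta_t = \delta$ makes this hold simultaneously for all $t$. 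The point is that for $t \ge t_0$ we have $\sqrt{\lam_t}\,S_* \le R\sqrt d$, so on this good event $\th^* \in \barC_{t-1}$ for all $t \ge t_0$, i.e. $\sqrt{\barbeta_{t-1}}$ is a genuine confidence radius from time $t_0$ onward.

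Next I would split $\Reg_T = \sum_{t=1}^{t_0-1}(\cdots) + \sum_{t=t_0}^{T}(\cdots)$. For $t < t_0$ the instantaneous regret is at most $\max_{x\in\cX_t}\la x,\th^*\ra - \la x_t,\th^*\ra \le 2\|\th^*\| = 2S_*$ since $\|x\|\le 1$, giving the $O(t_0 S_*)$ term. For $t \ge t_0$ I would run the usual optimism analysis: on the good event, optimism gives instantaneous regret $\le 2\sqrt{\barbeta_{t-1}}\,\|x_t\|_{V_{t-1}(\lam_t)^{-1}}$, so after Cauchy--Schwarz over $t$, $\sum_{t=t_0}^T (\text{inst. reg.}) \le 2\sqrt{T}\sqrt{\sum_{t=t_0}^T \barbeta_{t-1}\,\|x_t\|^2_{V_{t-1}(\lam_t)^{-1}}}$. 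I would bound $\sqrt{\barbeta_{t-1}} \le \sqrt{\barbeta_{T-1}} = O(R\sqrt{d\log(T/\lam_T)})$ using monotonicity of the log-determinant and $\delta_t^{-2} = O(t^4)$, so $\log(1/\delta_t) = O(\log T)$; the extra $R\sqrt d$ in $\sqrt{\barbeta}$ only contributes inside the same $\log$, hence is absorbed. Then the elliptical potential (log-determinant) lemma controls $\sum_t \|x_t\|^2_{V_{t-1}(\lam_t)^{-1}}$.

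The main obstacle I anticipate is the elliptical potential argument with a \emph{decreasing} regularizer: the textbook lemma $\sum_t \min(1,\|x_t\|^2_{V_{t-1}(\lam)^{-1}}) \le 2\log(|V_T(\lam)|/|\lam I|)$ assumes a fixed $\lam$, whereas here $V_{t-1}$ uses $\lam_t$ which shrinks over $t$. I would handle this by the standard monotonicity trick: since $\{\lam_t\}$ is nonincreasing, $V_{t-1}(\lam_t) \succeq V_{t-1}(\lam_T)$ (same data matrix, larger ridge earlier), hence $\|x_t\|^2_{V_{t-1}(\lam_t)^{-1}} \le \|x_t\|^2_{V_{t-1}(\lam_T)^{-1}}$, reducing the sum to a fixed-regularizer potential with $\lam = \lam_T$, which is then $O(d\log(1+ T/\lam_T))$. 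One must separately check that using $\lam_T$ in the potential is consistent with using $\lam_t$ inside $\sqrt{\barbeta_{t-1}}$ — it is, because each step's confidence radius is self-contained. Finally I would assemble: the $\sqrt{T}$ regime contributes $O(R\sqrt{T}\,d\log(T/\lam_T))$ after multiplying the $\sqrt{\barbeta}$ bound by the $\sqrt{d\log(T/\lam_T)}$ from the potential; the clipping of the $\min(1,\cdot)$ terms (for the large-norm steps where $\|x_t\|_{V^{-1}}>1$) contributes an additional additive $O(\sqrt{\barbeta_{T-1}}\cdot d) = O(S_* d\log(1/\lam_T))$ — this is the analogue of the lower-order $S_* d$ term and is where the $\log(1/\lam_T)$ dependence on the final regularizer enters. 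Combining the three pieces gives the stated bound; a short argument that $t_0$ is finite (it is, since $\lam_t\to$ something making $\lam_t^{1/2}S_* \le R\sqrt d$, e.g. for polynomially decaying $\lam_t$) closes the proof.
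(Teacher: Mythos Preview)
Your overall structure matches the paper's proof closely: the split at $t_0$, the union bound over $\{\delta_t\}$ to validate the ellipsoid for $t\ge t_0$, and the monotonicity reduction $\|x_t\|_{V_{t-1}(\lam_t)^{-1}} \le \|x_t\|_{V_{t-1}(\lam_T)^{-1}}$ are exactly what the paper does. The gap is in your treatment of the clipped (large-norm) steps.

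You apply Cauchy--Schwarz over \emph{all} $t\ge t_0$ and then invoke the elliptical potential lemma; but that lemma only controls $\sum_t \min\big(1,\|x_t\|^2_{V_{t-1}(\lam_T)^{-1}}\big)$, not $\sum_t \|x_t\|^2_{V_{t-1}(\lam_T)^{-1}}$, and for the terms with $\|x_t\|^2_{V^{-1}}>1$ the gap between $u$ and $\log(1+u)$ is unbounded as $\lam_T\to 0$. Your asserted accounting for those terms, ``$O(\sqrt{\barbeta_{T-1}}\cdot d)=O(S_* d\log(1/\lam_T))$'', is also not right as written: $\sqrt{\barbeta_{T-1}}=O\big(R\sqrt{d\log(T/\lam_T)}\big)$ has nothing to do with $S_*$, and you cannot bound $r_t$ by $2\sqrt{\barbeta_{t-1}}$ alone on those steps since $\|x_t\|_{V_{t-1}^{-1}}>1$ there. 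The fix (and this is precisely the paper's argument) is to split \emph{before} Cauchy--Schwarz: define $H_T=\{t\ge t_0:\|x_t\|^2_{V_{t-1}(\lam_T)^{-1}}>1\}$, apply Cauchy--Schwarz and the elliptical potential lemma only on $\barH_T$, and on $H_T$ revert to the trivial bound $r_t\le 2S_*$ (the same one you used for $t<t_0$) together with the elliptical-potential \emph{count} $|H_T|=O\big(d\log(1+1/\lam_T)\big)$. That is what produces the additive $S_* d\log(1/\lam_T)$ term in the statement. A minor point: you do not need $t_0$ to be finite for the theorem as stated; if $t_0>T$ the bound is simply dominated by its first term.
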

Theorem~\ref{thm:naoful} is stated for a general sequence $\{\lam_t\}$. 
In the corollary below, we provide a few choices.
\begin{corollary}\label{cor:changing}
Under the same assumption as Theorem~\ref{thm:naoful}, 
\begin{enumerate}
    \item[(i)] If $\lambda_t=\frac{R^2 d}{\log^\gam(1+t)}$ for $\gam>0$, w.p. at least $1-\dt$,
      \[\Reg_T= \tcO(Rd\sqrt{T} + S_*(d + \exp(S_*^{2/\gam})))\] 
    \item[(ii)] If $\lambda_t=\frac{R^2 d}{t^\alpha}$ for $\alpha\geq 0$, w.p. at least $1-\dt$,
      \[\Reg_T= \tcO(Rd\sqrt{(\alpha^2+1)T} + S_*(d  \alpha + S_*^{2/\alpha})) \] 
    \item[(iii)] If $\lambda_t=\frac{R^2 d}{\exp(t^q)}$ for $q\in(0,1)$, w.p. at least $1-\dt$,
      \[\Reg_T= \tcO(Rd\sqrt{T^{1+q}}+S_*(d T^q + \log^{1/q}(S_*^2)))\] 
\end{enumerate}
\end{corollary}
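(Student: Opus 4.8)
The plan is to specialize Theorem~\ref{thm:naoful} to each of the three schedules and reduce each claim to three elementary computations: (a)~solving the inequality that defines the critical time $t_0$, (b)~bounding $\log(T/\lam_T)$, and (c)~bounding $\log(1/\lam_T)$. Once these are in hand, substituting into
\[
  \Reg_T = \cO\del{t_0 S_* + R\sqrt{T}\cd d\log\del{\tfrac{T}{\lam_T}} + S_* d\log\del{\tfrac{1}{\lam_T}}}
\]
is mechanical: one retains all dependence on $R$, $d$, $S_*$ and on the schedule parameter ($\gam$, $\alpha$, or $q$), and collapses into $\tcO(\cdot)$ only factors that are polylogarithmic in $T$. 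One should also check admissibility: in all three cases $\lam_t = R^2 d/g(t)$ with $g$ nondecreasing and $g(1)\ge 1$ (namely $g(t)=\log^\gam(1+t)$, $g(t)=t^\alpha$, $g(t)=\exp(t^q)$), so $\{\lam_t\}$ is positive and nonincreasing as Theorem~\ref{thm:naoful} requires; and each part of the corollary holds on exactly the $1-\dt$ event furnished by that theorem, so nothing extra is needed for the probability statement.

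For Step~(a), writing $\lam_t = R^2 d/g(t)$ turns the condition $\lam_t^{1/2}S_* \le R\sqrt d$ into $R\sqrt d\, S_*/\sqrt{g(t)}\le R\sqrt d$, i.e.\ $g(t)\ge S_*^2$; hence $t_0 = \min\{t\ge 1: g(t)\ge S_*^2\} = \lceil g^{-1}(S_*^2)\rceil$. Inverting the three choices of $g$ gives $t_0 = \cO(\exp(S_*^{2/\gam}))$ for~(i), $t_0 = \cO(S_*^{2/\alpha})$ for~(ii), and $t_0 = \cO(\log^{1/q}(S_*^2))$ for~(iii) (using $2\log S_* = \log(S_*^2)$). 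Multiplying by $S_*$ already delivers the first summand of every claimed bound.

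For Steps~(b) and~(c), substitute $\lam_T = R^2 d/g(T)$, so $\log(1/\lam_T) = \log g(T) - \log(R^2 d)$ and $\log(T/\lam_T) = \log T + \log g(T) - \log(R^2 d)$. In~(i), $\log g(T) = \gam\log\log(1+T) = \tcO(1)$, so both logs are $\tcO(1)$ and the last two summands reduce to $\tcO(Rd\sqrt T)$ and $\tcO(S_* d)$. In~(ii), $\log g(T) = \alpha\log T$, so $\log(1/\lam_T) = \tcO(\alpha)$ and $\log(T/\lam_T) = \tcO(1+\alpha)$; using $1+\alpha \asymp \sqrt{\alpha^2+1}$ rewrites the middle summand as $\tcO(Rd\sqrt{(\alpha^2+1)T})$ and the last as $\tcO(S_* d\alpha)$. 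In~(iii), $\log g(T) = T^q$ is polynomial in $T$ and so is not absorbed by $\tcO(\cdot)$: $\log(1/\lam_T) = \tcO(T^q)$, making the last summand $\tcO(S_* d T^q)$, with the middle summand carrying the same $T^q$ factor.

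I expect the step requiring genuine care — rather than bookkeeping — to be~(iii): there both logarithms grow polynomially in $T$, so one has to be precise about which terms $\tcO(\cdot)$ absorbs and confirm that the resulting leading ($\sqrt T$-order) term collapses to the claimed power of $T$ rather than a larger one. Cases~(i) and~(ii), and the $t_0 S_*$ contributions throughout, are then just direct substitution into Theorem~\ref{thm:naoful} using the inversions of Step~(a).
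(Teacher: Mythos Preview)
Your plan is correct and mirrors the paper's proof exactly: for each schedule, solve $\lam_t^{1/2}S_*\le R\sqrt d$ for $t_0$, evaluate $\log(T/\lam_T)$ and $\log(1/\lam_T)$, and substitute into Theorem~\ref{thm:naoful}. One remark on your flagged concern in~(iii): carrying the substitution through gives the middle summand $R\sqrt T\cdot d\log(T/\lam_T)=\tilde\cO(Rd\sqrt T\cdot T^q)=\tilde\cO(Rd\,T^{1/2+q})$, whereas the corollary states $Rd\sqrt{T^{1+q}}=Rd\,T^{(1+q)/2}$; the paper's own derivation in the appendix reaches the same $T^{1/2+q}$ expression before writing its final $\tilde\cO$ line, so the discrepancy you anticipated is a typo in the stated exponent rather than a gap in your argument.
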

Various choices of $\{\lam_t\}$ exhibit different tradeoffs between the leading term (involving $\sqrt{T}$) and the lower order term (involving $S_*$) in the regret bound.
If the practitioner knows a rough range of $S_*$ and the time horizon $T$, one can make an informed choice accordingly.
Specifically, if $R\sqrt{T}$ is much larger than $S_*$ then the rate $\lam_t = \cO(1/\log^\gam(t))$ is preferred whereas if $S_* \gg R T^{1-(q/2)}$ then the rate $\lam_t = \exp(-t^q)$ is preferred.
If there is absolutely no information on $S_*$, then the polynomial decay $R^2 d / t^\alpha$ seems most reasonable because it does not introduce an exponential dependence on $S_*$ and does not change the order w.r.t. $T$.

\textbf{Computational complexity.}
Note that when the arm set is finite, OFUL can be implemented with $\cO(d^2|\cX_t|)$ time complexity per time step by updating $V_t^{-1}(\lam)$ using the matrix inversion lemma and $\log(|V_{t}(\lam) |)=\log(|V_{t-1}(\lam) |) + \log(1 + \|x_t\|^2_{V_{t-1}^{-1}(\lam)})$.
Such updates are nontrivial for NAOFUL since it has a time-varying $\lam_t$.
However, for the polynomial decay of $\lam_t = \fr{R^2 d}{t^\alpha}$, one can achieve a similar decay by a piecewise-linear schedule of $\lam'_t = R^2 d \exp(-\lfl \alpha \log(t) \rfl)$ without changing the regret guarantee. 
This way, we can make the same efficient updates as OFUL while $\lam'_t$ does not change.
Up to time $T$, the regularizer $\lam'_t$ changes only $\cO(\log(T))$ times, and when such a change happens we pay $\cO(d^3)$ for recomputing the inversion.
Therefore, the average per-time-step time complexity is $\cO(d^2|\cX_t| + d^3 \log(T)/T)$ where the latter term disappears quickly as $T$ gets large, making NAOFUL as efficient as OFUL.

\begin{remark}
We remark that Theorem~\ref{thm:naoful} can be further tightened by replacing the definition of $t_0$ to $t'_0 := \min\{t\ge1: \lam \|\th^*\|_{V_t(\lam_t)^{-1}} \le R\sqrt{d}\}$. 
However, such a bound involves an algorithm-dependent quantity $V_t(\lam_t)$ and does not provide further insight.
We speculate that it might be possible to bound $t'_0$ by a function of the sequence of the arm sets $\{\cX_t\}$ and improve the bound in Theorem~\ref{thm:naoful}.
Such an investigation is beyond the scope of our paper, which we leave as future work.
\end{remark}

\subsection{Proof of Theorem~\ref{thm:naoful}}

We provide a proof sketch here and defer the full proof to our appendix.
The proof of NAOFUL differs from that of~\citet{ay11improved} in two ways.
First, we separate the analysis by whether the confidence bound is true at time $t$ or not.
Second, we leverage the result of~\citet[Exercise 19.3]{lattimore20bandit}, which we call the \textit{elliptical potential count} lemma and present its improved and generalized version in the appendix.
This way, we do not introduce a factor of $1\vee S_*$ in the leading term of the regret when dropping the assumption of~\citet{ay11improved} that $\la x_t, \th^*\ra \in [-1,1], \forall t$.

Let $r_t := \max_{x\in\cX_t} \la x, \th^*\ra - \la x_t, \th^*\ra$ and $t_0 = \min\{t: \lam_t^{1/2}S_* \le R\sqrt{d}\}$.
Then, using \citet[Theorem 2]{ay11improved} with a union bound over $t\ge1$, the confidence bound at $t\ge t_0$ is true; see Theorem~\ref{Thm: Confidence_ell} in our appendix.
Then,
\begin{align*}
    \Reg_T 
    &= \sum_{t=1}^{t_0-1} r_t + \sum_{t=t_0}^T r_t   
    \\&\le t_0 2S_* + \sum_{t=t_0}^T r_t    \tag{$\|x\|\le 1, \forall x\in \cX_t, \forall t$} 
\end{align*}
It remains to bound the second term.
Define $H_T := \{t\in \{t_0,\dots,T\}: \|x_t\|_{V_{t-1}^{-1}(\lambda_T)}^2 > 1\}$ and $\barH_T := \{t_0,\ldots,T\} \sm H_T$.
Then, 
\begin{align}
  &\sum_{t=t_0}^T r_t \notag
\\&= \sum_{t=t_0}^T r_t \one\cbr[0]{\|x_t\|^2_{V^{-1}_{t-1}(\lam_T) } \leq 1 } 
     + \sum_{t=t_0}^T r_t \one\cbr[0]{\|x_t\|^2_{V^{-1}_{t-1}(\lam_T) } > 1 }  \notag
\\&\le \sum_{t=t_0}^T r_t \one\cbr[0]{\|x_t\|^2_{V^{-1}_{t-1}(\lam_T) } \leq 1 } 
     + 2S_* |H_T| \notag
\\&\le \sqrt{|\barH_T| \sum_{t\in \barH_T} r_t^2} 
     + 2S_* |H_T| \label{eq:proof-thm3-lastline}
\end{align}
where the last line is due to Cauchy-Schwarz.
The standard analysis of linear bandits~\cite{ay11improved} implies that (i) $r_t \le 2 \sqrt{\barbeta_{t-1}} \|x_t\|_{V_{t-1}^{-1}(\lam_t)} \le 2 \sqrt{\barbeta_{t-1}} \|x_t\|_{V_{t-1}^{-1}(\lam_T)}$ and (ii) $\sqrt{\barbeta_{t-1}} \le R\sqrt{d \log(1 + \fr{t-1}{d\lam_t}) + 2\ln(\fr{1}{\dt_t})  } + R\sqrt{d}
= \cO( R\sqrt{d \log(T (1+ \lam_T^{-1}))}) $ where the last inequality holds as $t\le T$ and both $\{\lam_t\}$ and $\{\dt_t\}$ are nonincreasing. 
(Recall that we ignore the dependence on $\dt$ in the order notation.)
Then,
\begin{align*}
  \sum_{t\in \barH_T} r_t^2
    &\le \sum_{t\in\barH_T} 4\barbeta_{t-1} \|x_t\|^2_{V^{-1}_{t-1}(\lam_T) } 
  \\&\le \cO( R^2d \log(T (1+ \lam_T^{-1}))) \cd \sum_{t\in\barH_T}  \|x_t\|^2_{V^{-1}_{t-1}(\lam_T) } 
  \\&=   \cO\del{ ( R^2d \log(T (1+ \lam_T^{-1})))\cd d \log(1 + \frac{T}{d\lam_T}) }
\end{align*}
where the last inequality is due to the standard elliptical potential lemma (e.g., \citet[Lemma 11]{ay11improved}).
It remains to bound $2S_*|H_T|$.
The elliptical potential count lemma (Lemma~\ref{lem:epc} in our appendix) implies that $|H_T| = \cO(d\log(1 + \frac{1}{\lam_T}))$.
Applying $|\barH_T| \le T$ to Eq.~\eqref{eq:proof-thm3-lastline} concludes the proof.

\section{THE FIXED ARM SET}
\label{sec:fixed}

We now consider the fixed arm set case, a special case of changing arm set where $\cX_t = \cX, \forall t\ge1,$ for some $\cX$.
In this setting, the algorithm knows $\cX$ from the beginning, which means that it is possible to plan in an informed way so that we can estimate the mean reward of each arm in a stable manner.
Motivated by this observation, we propose a new algorithm OLSOFUL (Ordinary Least Squares OFUL) that performs a two-step warmup procedure in order to establish good estimates of the mean rewards before starting an OFUL-like arm selection strategy.

The full pseudocode of OLSOFUL can be found in Algorithm~\ref{alg:olsoful}.
OLSOFUL first obtain a set of arms $\cX_0 := \{x_1,\ldots,x_{|\cX_0|}\}$ from KY sampling described in Algorithm~\ref{alg:warmup_psuedo}.
Let $\barV_t = \sum_{s=1}^t x_s x_s^\T$. 
KY sampling selects at most $2d$ arms from $\cX$ so that the determinant of the matrix $\barV_{|\cX_0|}$
is sufficiently large (details in Section~\ref{subsec:proof-olsoful}).
Let $\barV_{t} := \sum_{s=1}^{t} x_s x_s^\T$. 
The second warmup sampling simply samples arms that maximize $\|x\|^2_{\barV_{t-1}}$ until $\|x\|^2_{\barV_{t-1}} \le 1$ for every $x\in\cX$.
The second sampling is an auxiliary device that allows the regret analysis, although we speculate that it may not be necessary.
For the rest of the time steps, we perform the usual OFUL-like sampling but now with the MLE estimator $\hth_{t-1,0}$ computed by Eq.~\eqref{eq:mle} with $\lam=0$, which does not use a regularizer.
The confidence radius $\sqrt{\beta_{t-1}}$ is thus adjusted accordingly.
The difference is that we no longer have the usual $\sqrt{\lam}S_*$ term but rather a term of $\cO(\sqrt{d + \ln(1/\dt)})$, which makes it norm-agnostic.

\def\tw{{\mathsf{w}}}
\def\OLS{{\mathsf{OLS}}}
\begin{algorithm}[t]
  \caption{OLSOFUL}
  \label{alg:olsoful}
  \begin{algorithmic}
    \STATE \textbf{Input:} Arm set $\cX \subset \RR^d$
    \STATE Obtain $\cX_0:=\{x_1,\ldots, x_{|\cX_0|}\}$ by KY Sampling (Algorithm~\ref{alg:warmup}).
    \STATE $t \larrow |\cX_0| + 1$ 
    \WHILE {$\max_{x\in \cX}\|x\|_{\bar{V}_{t-1}^{-1}}^2>1$}
        \STATE $x_t = \arg \max_{x\in \cX} \|x\|^2_{\bar{V}_{t-1}^{-1}}$.
        \STATE Pull the arm $x_t$ and receive reward $y_t\in\RR$.
        \STATE $t \larrow t + 1$
    \ENDWHILE
    \STATE     $\tau \larrow t -1 $
    \FOR{$t=\tau+1,\dots$}
        \STATE Compute the MLE $\hat{\theta}_{t-1,0}$ by Eq.~\eqref{eq:mle} with $\lam=0$.
        \STATE ${\sqrt{\beta_{t-1}^{\OLS}}} := R \del{ \sqrt{\ln\del{\fr{|\barV_{t-1}|}{|\barV_{\tau}|\dt^2} } } \!+\! \sqrt{\ln(4) d + 4\ln(\fr1\dt)} }$
        \STATE Pull $x_t=\arg\max_{x\in\cX}\max_{\theta \in C_{t-1}} \langle x,\theta\rangle $ where 
         \begin{align*}
          {C_{t-1} }= \cbr{\th\in\RR^d: \| \hth_{t-1,0}-\th\|_{\barV_{t-1}} \le \blue{\sqrt{\beta^{\OLS}_{t-1}}}  }~.
        \end{align*}
        \STATE Receive reward $y_t\in\RR^d$~.
    \ENDFOR
  \end{algorithmic}
\end{algorithm}

OLSOFUL enjoys the following regret guarantee.
\begin{theorem}\label{thm:olsoful}
    Suppose we run \textup{OLSOFUL} (Algorithm~\ref{alg:olsoful}).
    Then, with probability at least $1-2\delta$,
    \begin{align*}
      \Reg_T = \cO\del{R d\sqrt{T\ln^2\del{\frac{T}{d}}} + S_* d \ln(d) }.
    \end{align*}
\end{theorem}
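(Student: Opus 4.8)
The plan is to decompose the regret of OLSOFUL into the cost of the two warmup phases plus the cost of the OFUL-like phase, and then to show that the key payoff of the warmup — a stable least-squares estimator — lets us run OFUL with $\lambda=0$ and a norm-free confidence width. First I would bound the length of the two warmup phases. KY sampling returns $|\cX_0| \le 2d$ arms by construction, so the first phase contributes at most $2d \cdot 2S_* = \cO(S_* d)$ to the regret (each arm has $\|x\|\le 1$, so the instantaneous regret is at most $2S_*$). For the second warmup phase, the stopping rule $\max_{x\in\cX}\|x\|^2_{\barV_{t-1}^{-1}}\le 1$ together with the elliptical potential argument shows that the number of rounds it runs is $\cO(d\ln d)$: each accepted arm has $\|x_t\|^2_{\barV_{t-1}^{-1}}>1$, so $\log|\barV_t|$ increases by at least $\log 2$, and since $|\barV_0|$ is already $\Omega((\text{something})^d)$ from KY sampling while $|\barV_t|$ can only grow by a bounded factor per step, the count is $\cO(d\ln d)$; hence this phase costs $\cO(S_* d \ln d)$. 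Here I would lean on whatever determinant lower bound on $\barV_{|\cX_0|}$ the KY-sampling guarantee in Section~\ref{subsec:proof-olsoful} provides.

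Next I would establish validity of the confidence set $C_{t-1}$ for all $t>\tau$. Because $\hth_{t-1,0}$ is the \emph{unregularized} least-squares estimator and, by the end of the warmup, $\barV_\tau$ is invertible (KY sampling spans $\RR^d$) and in fact $\max_{x\in\cX}\|x\|^2_{\barV_{t-1}^{-1}}\le 1$, the self-normalized martingale bound of \citet[Theorem~1]{ay11improved} applied with the design matrix $\barV_{t-1}$ and reference point $\barV_\tau$ yields $\|\hth_{t-1,0}-\th^*\|_{\barV_{t-1}} \le R\sqrt{\ln(|\barV_{t-1}|/(|\barV_\tau|\dt^2))} + \sqrt{\beta^{\text{w}}}$, where the warmup term $\sqrt{\beta^{\text{w}}}$ is $\cO(R\sqrt{d+\ln(1/\dt)})$ — this is exactly $\sqrt{\beta^{\OLS}_{t-1}}$. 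The crucial point, and where the $S_*$-dependence is killed, is that with $\lambda=0$ there is no $\sqrt{\lambda}\,\|\th^*\|$ bias term; instead the role of the prior-ignorance term is played by the controlled warmup covariance, giving only an additive $\cO(\sqrt d)$. I would carefully check that the stopping condition $\|x\|^2_{\barV_{t-1}^{-1}}\le 1$ is precisely what is needed to bound the residual contribution of the pre-$\tau$ data in the self-normalized inequality by $\cO(d)$ rather than something $\th^*$-dependent.

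With the confidence set valid, the OFUL-phase regret follows the standard optimism argument: on the good event, $r_t \le 2\sqrt{\beta^{\OLS}_{t-1}}\,\|x_t\|_{\barV_{t-1}^{-1}}$, and then Cauchy–Schwarz plus the elliptical potential lemma \citep[Lemma~11]{ay11improved} gives $\sum_{t=\tau+1}^T r_t \le \cO\big(\sqrt{T}\cdot\sqrt{\beta^{\OLS}_{T}}\cdot\sqrt{d\ln(T/d)}\big)$. Since $\sqrt{\beta^{\OLS}_{T}} = \cO(R\sqrt{d\ln(T/d)} + R\sqrt{d}) = \cO(R\sqrt{d\ln(T/d)})$ (no $\ln(1/\lambda)$ blowup because $\lambda=0$ and $|\barV_\tau|$ is bounded below by a constant), this term is $\cO(Rd\sqrt{T\ln^2(T/d)})$. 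Adding the two warmup contributions $\cO(S_* d \ln d)$ and taking a union bound over the two failure events (the self-normalized bound and the KY-sampling event) for the claimed probability $1-2\dt$ completes the proof.

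The main obstacle I anticipate is the second point: making rigorous that the unregularized estimator, seeded only by the $\le 2d$ KY-sampling rounds and the $\cO(d\ln d)$ second-warmup rounds, admits a confidence width with \emph{no} dependence on $S_*$ — one must verify that the self-normalized concentration inequality can be run with the (random) warmup design matrix as the "initial" covariance and that its determinant is both lower-bounded (for validity of the $\ln|\barV_\tau|$ term) and upper-bounded by $\cO(\poly(d))$ (so it does not leak into the leading term), and that the second-warmup stopping rule $\max_x \|x\|^2_{\barV^{-1}}\le 1$ is exactly the condition under which the cross terms between pre-$\tau$ and post-$\tau$ data stay controlled. Bounding the length of the second warmup phase by $\cO(d\ln d)$ rather than something larger is the other delicate piece, and it is what produces the $\ln d$ (rather than $\ln T$) factor in the lower-order term.
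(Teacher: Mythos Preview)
Your proposal follows the paper's approach closely and the overall three-part decomposition (warmup length $\cO(d\ln d)$, norm-free confidence set, then optimism plus Cauchy--Schwarz plus elliptical potential) is correct. Two misattributions are worth correcting before you fill in details. First, the second-warmup stopping condition $\max_{x\in\cX}\|x\|^2_{\barV_\tau^{-1}}\le 1$ is \emph{not} what controls the pre-$\tau$ residual; that residual rewrites exactly as $x^\T\barV_t^{-1}X_{\le\tau}^\T\eta_{\le\tau}$ (the bias cancels because $\hth_{\tau,0}$ is unregularized), and it is bounded by a \emph{second} application of the self-normalized bound with prior $\sigma\barV_\tau$, which collapses the determinant ratio to $((1+\sigma)/\sigma)^d$ and yields the $R\sqrt{2\ln(2)d+4\ln(1/\dt)}$ term regardless of the stopping rule --- the only thing needed here is that the warmup arms are chosen without looking at rewards. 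The stopping condition is used elsewhere: it gives $\ln(|\barV_t|/|\barV_\tau|)\le d\ln(t/d)$ in the confidence width, and it guarantees $\|x_t\|^2_{\barV_{t-1}^{-1}}\le 1$ for $t>\tau$ so that $u\le 2\ln(1+u)$ applies in the elliptical-potential step. Second, the $2\dt$ does not involve any ``KY-sampling event'' (KY sampling is deterministic); it comes from the two separate self-normalized bounds just described, one for the post-$\tau$ noise and one for the pre-$\tau$ noise.
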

It is easy to see that the regret of OLSOFUL matches that of OFUL (Theorem~\ref{thm:oful}).
This implies that there is no cost of not knowing the norm of $\th^*$ in the worst-case regret scenario.
Note that KY sampling can be implemented so the overall time complexity is $\cO(d^2|\cX|)$, meaning that the time complexity of OLSOFUL is at most that of OFUL.
In this respect, we present a computationally efficient version of KY sampling in the appendix. 
Furthermore, OLSOFUL allows us not to regularize the intercept term, which is a standard practice in statistics and ML for supervised learning.
Thus, the rewards can be shifted by an arbitrary but fixed constant and the regret bound does not change.
Such a property is not available to OFUL and other existing algorithms, except for elimination algorithms~\cite[Section 21]{lattimore20bandit}.
While, elimination algorithms have superior regret bounds when $|\cX| = o(e^d)$, in practical regimes they have large regret due to its unique behavior of discarding samples.

\begin{algorithm}[t]
  \caption{BH Algorithm~\cite{betke93approximating,kumar05minimum}}
  \label{alg:warmup_psuedo}
  \begin{algorithmic}
    \STATE \textbf{Input:} the set of points $\cX \subset \RR^d$
    \STATE If $|\cX| \le 2d$, then $\cX_0 \larrow \cX$. Return.
    \STATE $\Psi \larrow{0}$, $\cX_0 \larrow \emptyset$, $i \larrow 0$.
    \WHILE{$\RR^d\setminus \Psi \neq \emptyset$}
    \STATE $i \larrow i + 1$
    \STATE Pick an arbitrary direction $b^i \in \RR^d$ in the orthogonal complement of $\Psi$.
    \STATE $p \larrow \arg \max_{x\in\cX} \la b^i, x \ra$; $\cX_0 \larrow \cX_0 \cup \{p\}$ 
    \STATE $q \larrow \arg \min_{x\in\cX} \la b^i, x \ra$; $\cX_0 \larrow \cX_0 \cup \{q\}$ 
    \STATE $\Psi \larrow \mathsf{Span}(\Psi, \{p - q\})$
    \ENDWHILE
  \end{algorithmic}
\end{algorithm} 

\begin{remark}
  One can replace our two-step warmup procedure with a design of experiment approach (specifically G-optimal design) and obtain a similar regret bound.
  That is, solve $\min_{\pi\in\Delta^{|\cX|}} \max_{x} \|x\|^2_{\barV(\pi)^{-1}}$ where $\barV(\pi)=\sum_{x\in\cX} \pi_x x x^\T$ and use a rounding procedure to select samples in a way that the fraction of samples spent on each arm is approximately $\pi_x$; see \citet[Appendix B]{fiez19sequential} for a discussion of rounding procedures.
  However, such an approach is computationally more expensive than ours as it requires iterative algorithms for solving the optimization problem.
  In contrast, our simple warm-up procedure consists of closed-form expressions and is computationally efficient.
  Alternatively, one can use the approximate barycentric spanner~\cite{awerbuch04adaptive} that consists of closed form operations but its time complexity is at least $\Omega(d^3 \log(d) |\cX|)$ compared to $\cO(d^2|\cX|)$ of our efficient implementation in the appendix.
\end{remark}

\subsection{Proof of Theorem~\ref{thm:olsoful}}
\label{subsec:proof-olsoful}

We now provide a sketch of the proof, leaving the full proof to the appendix.
The proof has two main parts.
First, we need to find an upper bound on $\tau$ so that the regret up to time $\tau$, which is the length of the warmup phase, can be bounded by $2S_* \tau$.
Second, we need to bound the regret after $\tau$ where the main novelty is our confidence set that replaces $|\lam I|$ in the confidence width $\sqrt{\beta_{t-1}}$ of OFUL with $|\barV_\tau|$.
We provide the main ideas only and leave the full proof to the appendix.
Let us first describe the property of the warmup procedure.
For $\pi\in\Delta^{|\cX|}$, define
\begin{align}
    \barV(\pi) = \sum_{x\in\cX} \pi_x  x x^\T~.
\end{align}

First, we claim that 
\begin{align*}
    \tau = \cO(d\ln(d))~.
\end{align*}
To see this, let $\pi^* = \arg\min_{\pi \in \Delta^{|\cX|}} \max_{x\in\cX}  \|x\|^2_{\barV(\pi)}$, the solution of the G-optimal design problem.
By~\citet{kiefer60theequivalence}, $\pi^* = \arg\max_{\pi \in \Delta^{|\cX|}} \max_{x\in\cX} |\barV(\pi)|$ as well.
This implies that $|\barV_\tau| = \tau^d|\fr{1}{\tau}\barV_\tau| \le \tau^d|\barV(\pi^*)|$.
Let $\om = |\cX_0|$ be the final time step spent on the first stage of the warmup.
Then, $|\barV_{\om}|  = \om^d |\fr1{\om} \barV_{\om}|$.
These bounds on the determinants together imply
\begin{align*}
  &\ln |\barV_\tau|  - \ln | \barV_\om | 
  \\&\le d\ln(\tau ) + \ln(|\barV(\pi^*)|)  -  \ln|\fr1\om \barV_\om| - d \ln (\om) 
  \\&\le d\ln(\tau ) + \cO(d\ln(d)) - d \ln (\om)
\end{align*}
where the last inequality is due to \citet[Theorem 4.2]{kumar05minimum}, meaning that the first stage of the warmup allocates arms so the resulting (normalized) determinant is sufficiently close to the ideal allocation of $\pi^*$.
On the other hand, 
\begin{align*}
  \ln |\barV_\tau|  - \ln | \barV_\om |
  &=\sum_{t=\om+1}^\tau \ln ( 1 + \|x_t\|^{2}_{\barV_{t-1}^{-1} }) 
  \\&> (\tau-\om) \ln(2) 
\end{align*}
where the last inequality is due to the definition of $\tau$.
Using $\om \le 2d$, one can solve the sandwich bound formed by the two displays above, which concludes the proof of the claim above.
This, in turn, proves that the warmup procedure results in $\cO(S_* d\ln(d))$ regret.

Let us now derive our confidence bound used after the warmup. 
Let $X_{\le\tau} \in \RR^{\tau \times d}$ ($X_{>\tau} \in \RR^{(t-\tau) \times d}$) be the design matrix consisting of the pulled arms during the warmup phase (after the warmup up time step $t$, respectively).
Define $\eta_{\le\tau}=(\eta_1, \ldots, \eta_\tau)^\T$ and $\eta_{>\tau} = (\eta_{\tau+1},\ldots, \eta_t)^{\T}$.
We first note that the MLE $\hth_{t,0}$ satisfies
\begin{align*}
  \barV_t(\hth_{t,0} - \th^* ) = X_{>\tau} ^\T \eta_{>\tau} - \lam \barV_\tau (\th^* - \hth_{\tau,0})
\end{align*}
Then,
\begin{align*}
  &x^\T(\hth_{t,0}  - \th^*) 
  \\&= x^\T \barV_t^{-1} X^{\T}_{>\tau}\eta_{>\tau} - \lam x^\T \barV_t^{-1}\barV_{\tau}(\th^* - \hth_{\tau,0})  
\end{align*}
With probability at least $1-\dt$, we can bound the first term by $\|x\|_{\barV_t^{-1}} \| X^{\T}_{>\tau}\eta_{>\tau}\|_{\barV_t^{-1}} \le \|x\|_{\barV_t^{-1}} R\sqrt{\ln(\fr{|\barV_t|}{|\barV_\tau|\dt^2} )}$ for all $t\ge\tau+1$, which is formally stated in the appendix.
Such a result is a consequence of applying the standard self-normalized confidence bound of~\citet[Theorem 1]{ay11improved} where the parameter $V$ therein for the prior is set as $\barV_\tau$.
Furthermore, with probability at least $1-\dt$, one can bound
\begin{align*}
 - x^\T \barV_t^{-1}\barV_{\tau}(\th^* - \hth_{\tau,0})
  &=  x^\T \barV_t^{-1}X_{\le \tau}^\T \eta_{\le\tau}
\\&\le \|x\|_{\barV_{t}^{-1}} \|X^\T_{\le\tau} \eta_{\le\tau}  \|_{\barV_\tau^{-1}} 
\\&\le R\sqrt{2\ln(2) d + 4\ln(1/\dt)}
\end{align*}
where the last inequality is a consequence of applying~\citet[Theorem 1]{ay11improved} in a similar ways as above.

To bound the regret after $\tau$, the standard analysis goes through.
The only part that remains is that the usual term $\ln(|V_t(\lam)|/|\lam I|)$ is replaced by $\ln(|\barV_t|/|\barV_\tau|)$, which can be bounded by $\cO(d\ln(t/d))$ using the fact that $|\barV_\tau|$ is sufficiently large.

\def\OFULzero{{OFUL$_0$}\xspace}

\section{EXPERIMENTS}
\label{sec:expr}

\begin{figure*}
  \centering
  \begin{tabular}{ccc}
      \includegraphics[width=0.31\linewidth]{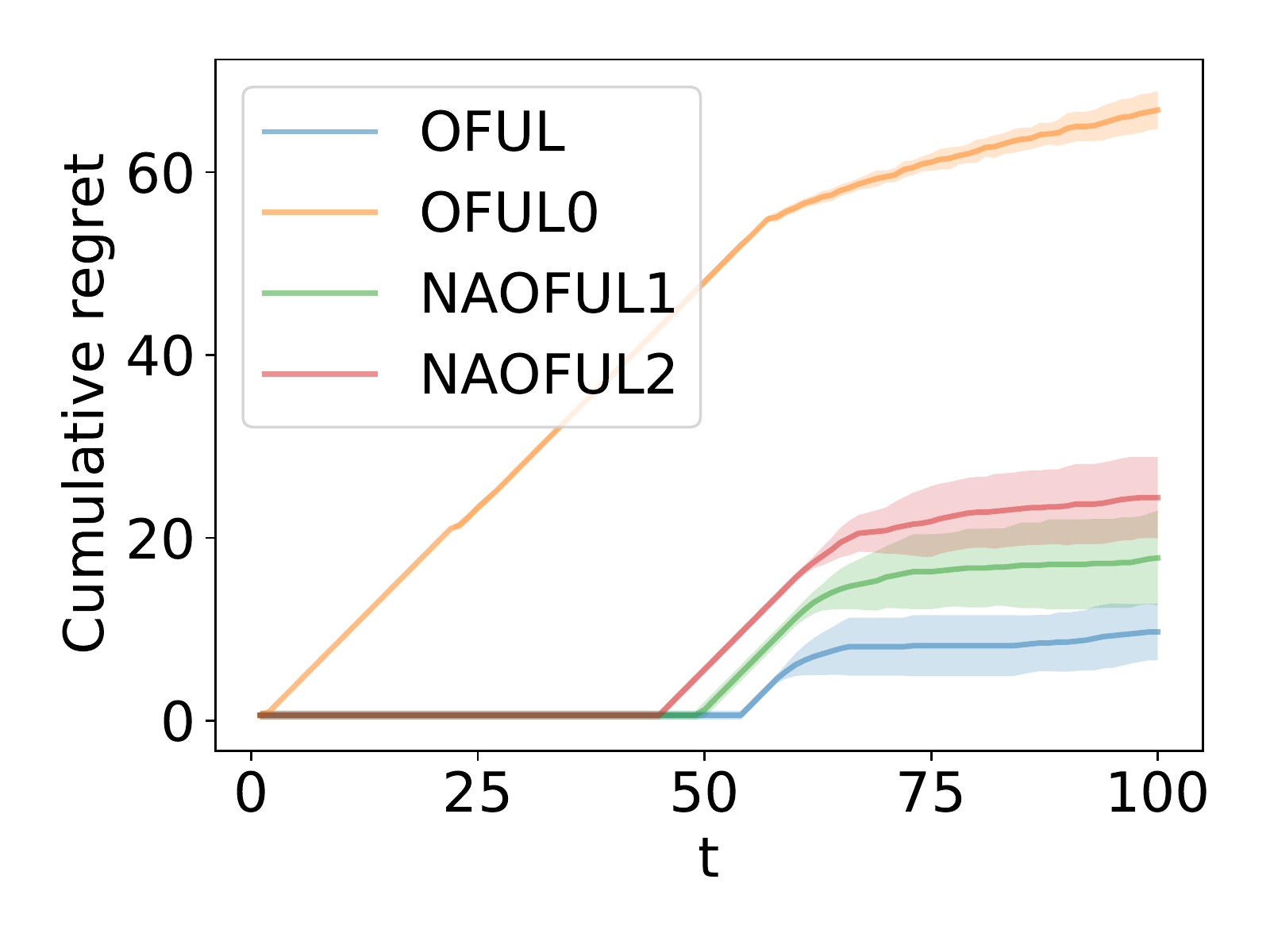}
     &\includegraphics[width=0.31\linewidth]{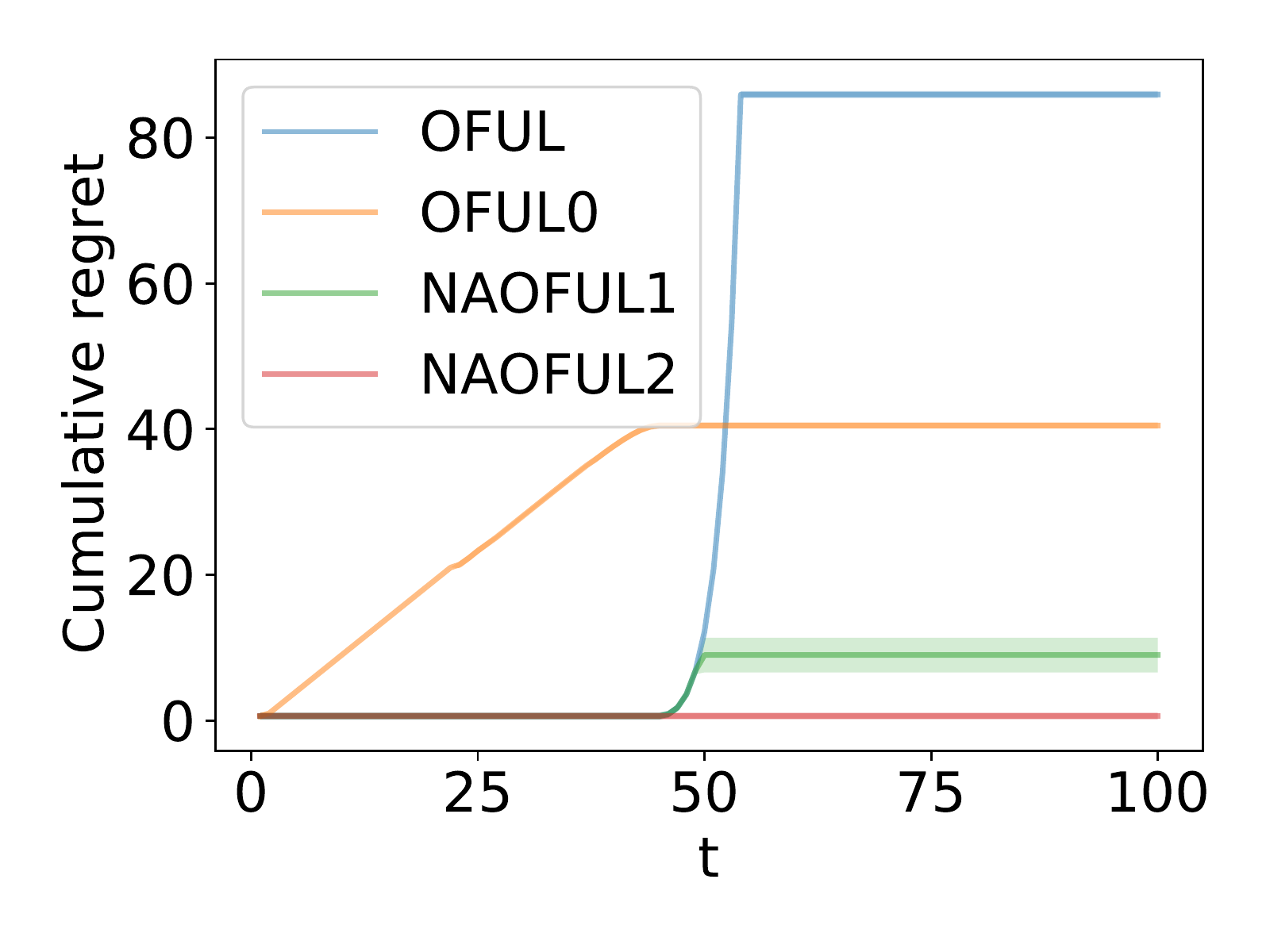} 
     &\includegraphics[width=0.31\linewidth]{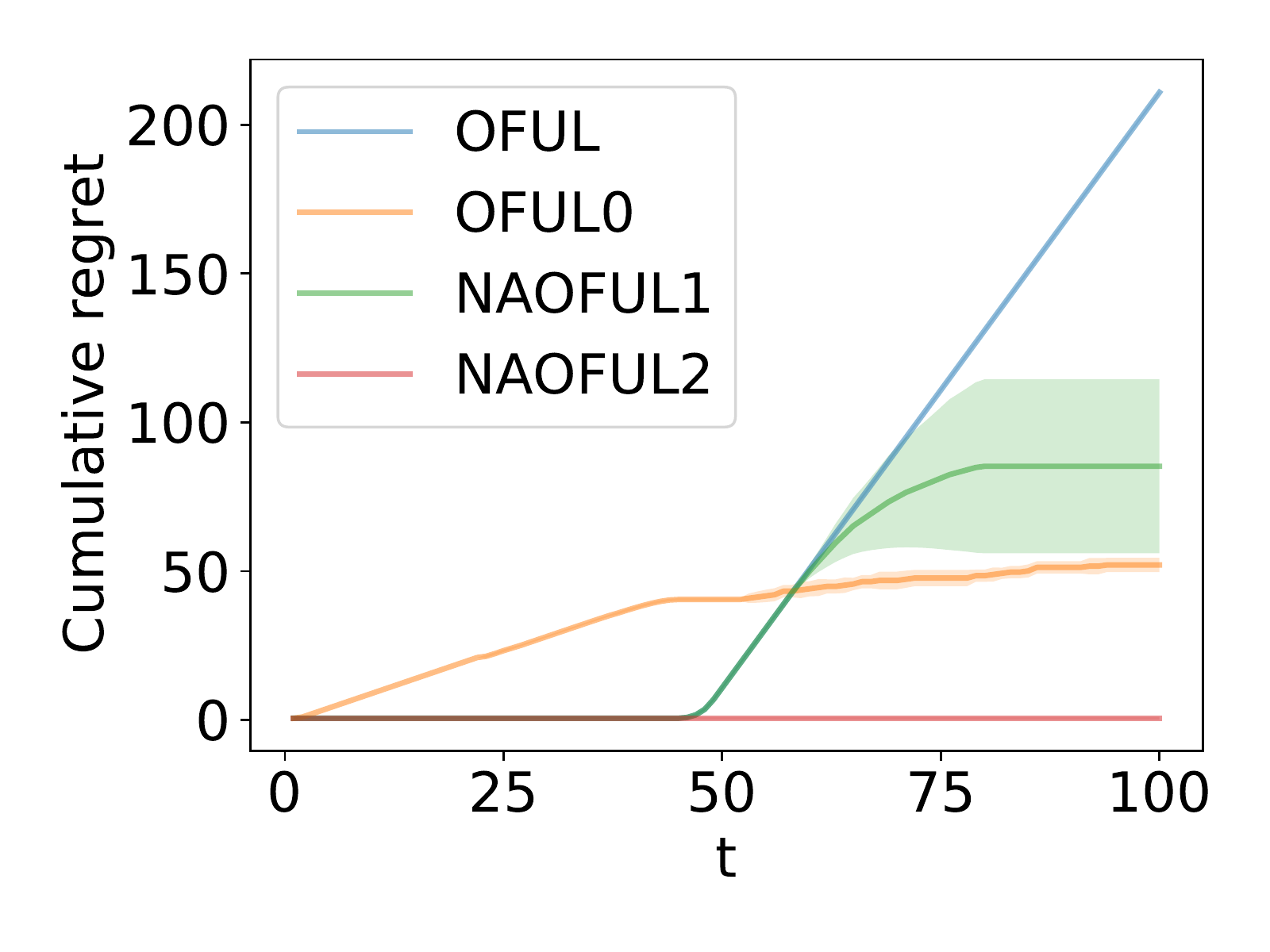} 
     \\ (a) & (b) & (c)
  \end{tabular}
  \caption{Experimental results. The plots (a), (b), and (c) corresponds to the instance  (a), (b), and (c) in Table~\ref{tab:expr}. Our methods consistently perform the best or near best in all the environments whereas OFUL and \OFULzero can suffers an almost linear regret in some of the instances.}
  \label{fig:expr}
\end{figure*}

In this section, validate the need for norm-agnostic algorithms and their effectiveness via a numerical evaluation in synthetic environments.

\textbf{Algorithms.}
We test three algorithms: OFUL (Algorithm~\ref{alg:oful} with $\lambda=1$ and $S=1$), NAOFUL (Algorithm~\ref{alg:naoful}, and \OFULzero (described below).
We run two versions of NAOFUL with $\lam_t = R^2 d / t^\alpha$: NAOFUL1 for $\alpha=1$ and NAOFUL2 for $\alpha=2$.
\OFULzero is OFUL without a regularization.
\OFULzero is a reasonable attempt to be norm-agnostic.
Specifically, we use the confidence set from \citet[Eq. (20.3)]{lattimore20bandit} for the ordinary least squares estimator $\hth_{t,0}$ that is the solution of Eq.~\eqref{eq:mle} with $\lam=0$ while breaking ties by taking the minimum norm solution:
\begin{align}
  \PP\Bigg(\exists t\ge1:  &\|\hth_{t,0} - \th^*\|_{V_t(0)}  \notag
  \\ &\ge R\sqrt{8\del{d\log(6) + \log\del[0]{\frac{\pi^2t^2}{6\dt}}} }  \Bigg)
  \le \dt~. \label{eq:oful0}
\end{align}
We then construct a confidence set with the constraint above to choose the optimistic arm as in Eq.~\eqref{eq:optimism}.
Such a confidence set does not involve $S_*$ and thus is norm-agnostic like NAOFUL, but \OFULzero does not have a known regret bound.\footnote{
  In fact, the proof of Eq.~\eqref{eq:oful0} requires that the chosen arms $\{x_1,\ldots,x_t\}$ do not depend on $\{y_1, \ldots, y_t\}$, which is violated in \OFULzero. 
  While this is not ideal, \OFULzero serves our purpose because \OFULzero's confidence bounds were never wrong in our experiments.
  Alternatively, one could use OFUL with an extremely small $\lambda$ (e.g., $10^{-16}$) and obtain similar results, but its estimator tends to be numerically unstable.
}

\textbf{Synthetic data.}
To evaluate the performance of the norm-agnostic algorithm for the changing arm set setting, we consider three problem instances.
We will consider a common arm set
\begin{align*}
\cX_t := \{(1,0)^\T, (0, \min\cbr{u, a_0 \cd b^t } )^\T\}, \forall t    ~.
\end{align*}
When arm $x_t$ is pulled, the rewards are generated by $y_t = x_t^\T\th^* + \eta_t$ where $\eta_t \sim \cN(0,1)$.
For ease of exposition, we refer to the arm $(1,0)^\T$ as A1 and $(0, \min\cbr{u, a_0 \cd b^t } )^\T$ as A2.
We describe the choices of the instance parameters in Table~\ref{tab:expr}.
\begin{table}
  \centering
  {\small
  \begin{tabular}{cccccc}\hline
    Inst. & $a_0$ & $b$ & $u$ & $\th^*$ & Best arm
    \\\hline   (a)  & $10^{-10}$ & $1.5$ & $1.0$ & $(1,0)^\T$ & $(1,0)^\T$
    \\ (b)  & $10^{-10}$ & $1.5$ & $1.0$ & $(1,100)^\T$ & $(1,0)^\T$ if $t\le45$
    \\ (c)  & $10^{-10}$ & $1.5$ & $0.05$ & $(1,100)^\T$ & $(1,0)^\T$ if $t\le45$
    \\\hline
  \end{tabular}
  }
  \caption{Our synthetic problem instances.}
  \label{tab:expr}
\end{table}
The intention for instance $(a)$ in Table~\ref{tab:expr}, we have $\|S_*\|  \le 1$, so OFUL's regret bound is valid, OFUL should work well whereas for instances $(b)$ and $(c)$ we have $\|S_*\| \gg 1$, so OFUL is expected to perform badly.
We run all the algorithms up to $T=100$ on each instance and measure their regret, which we repeat 10 times to obtain mean and standard deviation.

\textbf{Results.}
We plot the mean and standard deviation of the regret of each algorithm in Figure~\ref{fig:expr}, which shows that OFUL and \OFULzero have almost linear regret for some instances whereas NAOFUL1 and NAOFUL2 maintain low regret for all.
Specifically, for instance (a), one can see that \OFULzero is significantly worse than the rest, showing an almost linear regret. 
In fact, given $T$, one can construct an instance where \OFULzero suffers a linear regret up to time $T$ by setting $a_0$ arbitrarily small.
This is because its upper confidence bound (UCB), though valid, does not fall sufficiently low to allow the algorithm to pull the true best arm due to the fact that the second dimension of A2 is increasing geometrically over time. 
In contrast, NAOFUL1, NAOFUL2, and OFUL are able to ignore A2 initially but start to pull it later in order to avoid the risk of A2 becoming the best arm.
As noted in Table~\ref{tab:expr}, instance (b) switches the best arm at $t=46$. 
This forces the regret of \OFULzero, which mostly pulls A2, to stop growing, and the regret of OFUL, whose UCB on A2 is underestimated, to start exploding.
In contrast, both NAOFUL1 and NAOFUL2 are able to quickly start pulling A2 when the best arm switches, and NAOFUL2 is particularly good at it.
Note that OFUL, though late, is still able to pull A2 and stop growing its regret.
However, the results for instance (c) shows that by adjusting $u$, one can force OFUL to have linear regret; what happens is that the second dimension of A2 is just large enough to make it the best arm but not large enough to force OFUL's UCB of A2 to be the UCB of A1. 
\begin{remark}
  Readers may wonder why the regret plot for NAOFUL does not seem to follow a $\sqrt{T}$ growth rate. 
  The regret bound we show in this paper is the \textit{worst-case} regret, so individual regret curves for each instance may behave differently. 
  For instance-optimal regret bounds, we refer to~\citet{lattimore17theend}.
\end{remark}

\section{CONCLUSION}
\label{sec:conclusion}

In this paper, we have removed the common assumption in linear bandits that we have access to an upper bound $S$ on the norm $\|\th^*\|$ and/or the range of the mean rewards.
We believe our work has implications for problems beyond bandits because the standard linear bandit algorithm like OFUL or similar versions are abundant in online selective sampling~\cite{Dekel10robust,dekel12selective,gentile14onmultilabel} and reinforcement learning~\cite{zhou21nearly,du21bilinear}.

Our work opens up numerous future research directions.
First, we believe that our regret upper bound for the changing arm set setting might be loose.
It would be interesting to identify the optimal regret for norm-agnostic algorithms by finding a matching lower and upper bound on the regret.
Second, extending our work to kernels would be interesting.
Kernels are nontrivial to apply our techniques since their regret bounds scale with the effective dimension that in turn depends on the regularizer.
Thus, using a decreasing regularizer would increase the effective dimension over time, so a careful or even data-dependent scheduling of $\{\lam_t\}$ might be needed in order to keep the regret close to existing bounds obtained with knowledge of $S$.
Next, it would be interesting to also remove the assumption that the arm vector's norm is bounded by $1$.
Note that this is not a problem for the fixed arm set setting since one can scale the given arm set and scale their norm to be less than or equal to 1. 
The interesting case is the changing arm set setting where the norm of the arms may grow indefinitely over time.  
Finally, we believe a more extensive empirical evaluation of our algorithms on a number of differing real world data sets would help inform the risk of misspecifying $S$. 


\subsubsection*{Acknowledgements}

Sunder Sethuraman was supported by ARO W911NF-181-0311.
Kwang-Sung Jun was supported by Data Science Academy and Research, Innovation \& Impact at University of Arizona.


%
%

\subsection*{References}

\bibliography{library-shared}

\begin{thebibliography}{8}
\providecommand{\natexlab}[1]{#1}
\providecommand{\url}[1]{\texttt{#1}}
\expandafter\ifx\csname urlstyle\endcsname\relax
  \providecommand{\doi}[1]{doi: #1}\else
  \providecommand{\doi}{doi: \begingroup \urlstyle{rm}\Url}\fi

\bibitem[Abbasi-Yadkori et~al.(2011)Abbasi-Yadkori, Pal, and
  Szepesvari]{ay11improved}
Yasin Abbasi-Yadkori, David Pal, and Csaba Szepesvari.
\newblock {Improved Algorithms for Linear Stochastic Bandits}.
\newblock In \emph{Advances in Neural Information Processing Systems
  (NeurIPS)}, pages 1--19, 2011.

\bibitem[Antos et~al.(2010)Antos, Grover, and Szepesv{\'{a}}ri]{antos10active}
Andr{\'{a}}s Antos, Varun Grover, and Csaba Szepesv{\'{a}}ri.
\newblock {Active learning in heteroscedastic noise}.
\newblock \emph{Theoretical Computer Science}, 411\penalty0 (29-30):\penalty0
  2712--2728, 2010.

\bibitem[Betke and Henk(1993)]{betke93approximating}
Ulrich Betke and Martin Henk.
\newblock {Approximating the volume of convex bodies}.
\newblock \emph{Discrete {\&} Computational Geometry}, 10\penalty0
  (1):\penalty0 15--21, 1993.

\bibitem[Kiefer and Wolfowitz(1960)]{kiefer60theequivalence}
Jack Kiefer and Jacob Wolfowitz.
\newblock {The equivalence of two extremum problems}.
\newblock \emph{Canadian Journal of Mathematics}, 12:\penalty0 363--366, 1960.

\bibitem[Kim et~al.(2021)Kim, Yang, and Jun]{kim21improved}
Yeoneung Kim, Insoon Yang, and Kwang-Sung Jun.
\newblock {Improved Regret Analysis for Variance-Adaptive Linear Bandits and
  Horizon-Free Linear Mixture MDPs}, 2021.

\bibitem[Kumar and Yildirim(2005)]{kumar05minimum}
Piyush Kumar and E~Alper Yildirim.
\newblock {Minimum-volume enclosing ellipsoids and core sets}.
\newblock \emph{Journal of Optimization Theory and applications}, 126\penalty0
  (1):\penalty0 1--21, 2005.

\bibitem[Lattimore and Szepesv{\'{a}}ri(2020)]{lattimore20bandit}
Tor Lattimore and Csaba Szepesv{\'{a}}ri.
\newblock \emph{{Bandit Algorithms}}.
\newblock Cambridge University Press, 2020.

\bibitem[Russo and {Van Roy}(2013)]{russo13eluder}
Daniel Russo and Benjamin {Van Roy}.
\newblock {Eluder dimension and the sample complexity of optimistic
  exploration}.
\newblock In \emph{Advances in Neural Information Processing Systems
  (NeurIPS)}, pages 2256--2264, 2013.

\end{thebibliography}


\begin{thebibliography}{28}
\providecommand{\natexlab}[1]{#1}
\providecommand{\url}[1]{\texttt{#1}}
\expandafter\ifx\csname urlstyle\endcsname\relax
  \providecommand{\doi}[1]{doi: #1}\else
  \providecommand{\doi}{doi: \begingroup \urlstyle{rm}\Url}\fi

\bibitem[Abbasi-Yadkori et~al.(2011)Abbasi-Yadkori, Pal, and
  Szepesvari]{ay11improved}
Y.~Abbasi-Yadkori, D.~Pal, and C.~Szepesvari.
\newblock {Improved Algorithms for Linear Stochastic Bandits}.
\newblock In \emph{Advances in Neural Information Processing Systems
  (NeurIPS)}, pages 1--19, 2011.

\bibitem[Abe and Long(1999)]{abe99associative}
N.~Abe and P.~M. Long.
\newblock {Associative reinforcement learning using linear probabilistic
  concepts}.
\newblock In \emph{Proceedings of the International Conference on Machine
  Learning (ICML)}, pages 3--11, 1999.

\bibitem[Antos et~al.(2010)Antos, Grover, and Szepesv{\'{a}}ri]{antos10active}
A.~Antos, V.~Grover, and C.~Szepesv{\'{a}}ri.
\newblock {Active learning in heteroscedastic noise}.
\newblock \emph{Theoretical Computer Science}, 411\penalty0 (29-30):\penalty0
  2712--2728, 2010.

\bibitem[Auer(2002)]{auer02using}
P.~Auer.
\newblock {Using Confidence Bounds for Exploitation-Exploration Trade-offs}.
\newblock \emph{Journal of Machine Learning Research}, 3:\penalty0 397--422,
  2002.

\bibitem[Awerbuch and Kleinberg(2004)]{awerbuch04adaptive}
B.~Awerbuch and R.~D. Kleinberg.
\newblock {Adaptive Routing with End-to-End Feedback: Distributed Learning and
  Geometric Approaches}.
\newblock In \emph{Proceedings of the ACM Symposium on Theory of Computing
  (STOC)}, pages 45--53, 2004.

\bibitem[Betke and Henk(1993)]{betke93approximating}
U.~Betke and M.~Henk.
\newblock {Approximating the volume of convex bodies}.
\newblock \emph{Discrete {\&} Computational Geometry}, 10\penalty0
  (1):\penalty0 15--21, 1993.

\bibitem[Dani et~al.(2008)Dani, Hayes, and Kakade]{dani08stochastic}
V.~Dani, T.~P. Hayes, and S.~M. Kakade.
\newblock {Stochastic Linear Optimization under Bandit Feedback.}
\newblock In \emph{Proceedings of the Conference on Learning Theory (COLT)},
  pages 355--366, 2008.

\bibitem[Dekel et~al.(2010)Dekel, Gentile, and Sridharan]{Dekel10robust}
O.~Dekel, C.~Gentile, and K.~Sridharan.
\newblock {Robust selective sampling from single and multiple teachers}.
\newblock In \emph{Proceedings of the Conference on Learning Theory (COLT)},
  2010.

\bibitem[Dekel et~al.(2012)Dekel, Gentile, and Sridharan]{dekel12selective}
O.~Dekel, C.~Gentile, and K.~Sridharan.
\newblock {Selective sampling and active learning from single and multiple
  teachers}.
\newblock \emph{Journal of Machine Learning Research}, 13:\penalty0 2655--2697,
  2012.

\bibitem[Du et~al.(2021)Du, Kakade, Lee, Lovett, Mahajan, Sun, and
  Wang]{du21bilinear}
S.~S. Du, S.~M. Kakade, J.~D. Lee, S.~Lovett, G.~Mahajan, W.~Sun, and R.~Wang.
\newblock {Bilinear Classes: A Structural Framework for Provable Generalization
  in RL}.
\newblock \emph{arXiv preprint arXiv:2103.10897}, 2021.

\bibitem[Fiez et~al.(2019)Fiez, Jain, Jamieson, and Ratliff]{fiez19sequential}
T.~Fiez, L.~Jain, K.~G. Jamieson, and L.~Ratliff.
\newblock {Sequential Experimental Design for Transductive Linear Bandits}.
\newblock In \emph{Advances in Neural Information Processing Systems
  (NeurIPS)}, volume~32, 2019.

\bibitem[Gentile and Orabona(2014)]{gentile14onmultilabel}
C.~Gentile and F.~Orabona.
\newblock {On Multilabel Classification and Ranking with Bandit Feedback}.
\newblock \emph{Journal of Machine Learning Research}, 15:\penalty0 2451--2487,
  2014.

\bibitem[Ghosh et~al.(2021)Ghosh, Sankararaman, and Kannan]{ghosh2021problem}
A.~Ghosh, A.~Sankararaman, and R.~Kannan.
\newblock Problem-complexity adaptive model selection for stochastic linear
  bandits.
\newblock In \emph{International Conference on Artificial Intelligence and
  Statistics (AISTATS)}, pages 1396--1404. PMLR, 2021.

\bibitem[Kiefer and Wolfowitz(1960)]{kiefer60theequivalence}
J.~Kiefer and J.~Wolfowitz.
\newblock {The equivalence of two extremum problems}.
\newblock \emph{Canadian Journal of Mathematics}, 12:\penalty0 363--366, 1960.

\bibitem[Kim et~al.(2021)Kim, Yang, and Jun]{kim21improved}
Y.~Kim, I.~Yang, and K.-S. Jun.
\newblock {Improved Regret Analysis for Variance-Adaptive Linear Bandits and
  Horizon-Free Linear Mixture MDPs}, 2021.

\bibitem[Koc{\'{a}}k et~al.(2020)Koc{\'{a}}k, Munos, Kveton, Agrawal, and
  Valko]{kocak20spectral}
T.~Koc{\'{a}}k, R.~Munos, B.~Kveton, S.~Agrawal, and M.~Valko.
\newblock {Spectral bandits}.
\newblock \emph{Journal of Machine Learning Research}, 2020.

\bibitem[Kumar and Yildirim(2005)]{kumar05minimum}
P.~Kumar and E.~A. Yildirim.
\newblock {Minimum-volume enclosing ellipsoids and core sets}.
\newblock \emph{Journal of Optimization Theory and applications}, 126\penalty0
  (1):\penalty0 1--21, 2005.

\bibitem[Lattimore and Szepesv{\'{a}}ri(2017)]{lattimore17theend}
T.~Lattimore and C.~Szepesv{\'{a}}ri.
\newblock {The end of optimism? An asymptotic analysis of finite-armed linear
  bandits}.
\newblock In \emph{Proceedings of the International Conference on Artificial
  Intelligence and Statistics (AISTATS)}, 2017.

\bibitem[Lattimore and Szepesv{\'{a}}ri(2020)]{lattimore20bandit}
T.~Lattimore and C.~Szepesv{\'{a}}ri.
\newblock \emph{{Bandit Algorithms}}.
\newblock Cambridge University Press, 2020.

\bibitem[Li et~al.(2010)Li, Chu, Langford, and Schapire]{li10acontextual}
L.~Li, W.~Chu, J.~Langford, and R.~E. Schapire.
\newblock {A Contextual-Bandit Approach to Personalized News Article
  Recommendation}.
\newblock \emph{Proceedings of the International Conference on World Wide Web
  (WWW)}, pages 661--670, 2010.

\bibitem[Orabona and Cesa-Bianchi(2011)]{orabona11better}
F.~Orabona and N.~Cesa-Bianchi.
\newblock {Better Algorithms for Selective Sampling}.
\newblock In \emph{Proceedings of the International Conference on International
  Conference on Machine Learning (ICML)}, pages 433--440, 2011.

\bibitem[Russo and Roy(2014)]{russo14learning}
D.~Russo and B.~V. Roy.
\newblock {Learning to Optimize via Posterior Sampling}.
\newblock \emph{Mathematics of Operations Research}, 39\penalty0 (4):\penalty0
  1221--1243, 2014.

\bibitem[Russo and {Van Roy}(2013)]{russo13eluder}
D.~Russo and B.~{Van Roy}.
\newblock {Eluder dimension and the sample complexity of optimistic
  exploration}.
\newblock In \emph{Advances in Neural Information Processing Systems
  (NeurIPS)}, pages 2256--2264, 2013.

\bibitem[Thompson(1933)]{thompson33onthelikelihood}
W.~R. Thompson.
\newblock {On the Likelihood that One Unknown Probability Exceeds Another in
  View of the Evidence of Two Samples}.
\newblock \emph{Biometrika}, 25\penalty0 (3/4):\penalty0 285, 1933.

\bibitem[Wang et~al.(2021)Wang, Awasthi, Dann, Sekhari, and
  Gentile]{wang2021neural}
Z.~Wang, P.~Awasthi, C.~Dann, A.~Sekhari, and C.~Gentile.
\newblock Neural active learning with performance guarantees.
\newblock In A.~Beygelzimer, Y.~Dauphin, P.~Liang, and J.~W. Vaughan, editors,
  \emph{Advances in Neural Information Processing Systems (NeurIPS)}, 2021.

\bibitem[Zhou et~al.(2021)Zhou, Gu, and Szepesvari]{zhou21nearly}
D.~Zhou, Q.~Gu, and C.~Szepesvari.
\newblock {Nearly minimax optimal reinforcement learning for linear mixture
  markov decision processes}.
\newblock In \emph{Proceedings of the Conference on Learning Theory (COLT)},
  pages 4532--4576. PMLR, 2021.

\bibitem[Zhu and Nowak(2020)]{zhu20regret}
Y.~Zhu and R.~Nowak.
\newblock {On regret with multiple best arms}.
\newblock \emph{Advances in Neural Information Processing Systems (NeurIPS)},
  33:\penalty0 9050--9060, 2020.

\bibitem[Zhu and Nowak(2021)]{zhu21pareto}
Y.~Zhu and R.~Nowak.
\newblock {Pareto Optimal Model Selection in Linear Bandits}.
\newblock \emph{arXiv preprint arXiv:2102.06593}, 2021.

\end{thebibliography}

\clearpage
\appendix
\thispagestyle{empty}
\onecolumn 
\aistatstitle{Supplementary Material: \\ Norm-Agnostic Linear Bandits }


%
%
%
%
\renewcommand \thepart{}
\renewcommand \partname{}
\part{} 
\vspace{-4em}
\parttoc 


\def\tw{{\mathsf{w}}}

\section{Minor error of~\citet[Theorem 3]{ay11improved}}
\label{sec:error}

We claim that the upper bound reported in Theorem~\ref{thm:oful} should have an additive lower order term $\Omega(d S^*)$.
To see this, imagine the arm set is $\cX_t = \{e_1,\ldots, e_d\}$ where $e_i$ is the $i$-th indicator vector.
Suppose $\theta^* = S^* \cdot e_{d}$.
This way, pulling a suboptimal arm incurs instantaneous regret of $S^*$.
Assume that OFUL breaks the tie chooses an arm $i\neq i^*$, which is an allowed behavior for OFUL.
Assume that there is no noise; i.e., $R=0$, which makes $\sqrt{\beta_{t-1}} = \sqrt{\lambda} S^*$.
Quick calculation shows that the UCB for the pulled arm becomes $\sqrt{\beta_{t-1}} \frac{1}{1+\lambda}$ and that for the unpulled arm becomes $\sqrt{\beta_{t-1}} \frac{1}{\lambda} $.
So, OFUL will play $x_t = e_t$ up to time step $d$, incurring the regret of $S^*(d-1)$.

The source of such a mismatch comes from a minor error in the proof of OFUL regret bound.
In the proof of Theorem 3 of~\citet{ay11improved}, they show that
\begin{align*}
    r_t 
    \le 2 \min\{\sqrt{\beta_{t-1}(\dt)} \norm{X}_{\bar V_{t-1}^{-1}} , 1\}
    \le 2\sqrt{\beta_{t-1}(\dt)} \min\{ \norm{X}_{\bar V_{t-1}^{-1}} , 1\}
\end{align*}
The second inequality is not true in general and becomes true only if $\beta_{t-1}(\dt) \ge 1$.
The analysis of OFUL in~\cite{lattimore20bandit} goes around the issue by assuming $R=1$.

\section{Proof of Theorem~\ref{thm:naoful}}
Define the instantaneous regret as $r_t := \max_{x\in\cX_t} \la x, \th^*\ra - \la x_t, \th^*\ra$, then 

\begin{proof}
Consider the following,
\begin{align*}
    \Reg_T 
     &=\sum_{t=1}^T r_t\\
     &= \sum_{t=1}^T\max_{x\in\cX_t} \la x, \th^*\ra - \la x_t, \th^*\ra\\
    & = \sum_{t=1}^{T} \one\{\lambda_t^{1/2}S_*> R\sqrt{d}\}r_t + \sum_{t=1}^{T} \one\{\lambda_t^{1/2}S_*\leq R\sqrt{d}\}r_t\\
    &=\sum_{t=1}^{t_0-1} r_t + \sum_{t=t_0}^{T} r_t
\end{align*}
where $t_0=\min\{t\geq 1: \lambda_t^{1/2}S\leq R\sqrt{d}\}$. Therefore consider the case $\lambda_t^{1/2}S_*> R\sqrt{d},$
\begin{align*}
    \sum_{t=1}^{T} \one\{\lambda_t^{1/2}S_*> R\sqrt{d}\} r_t &= \sum_{t=1}^{t_0-1} r_t\\
    &= \sum_{t=1}^{t_0-1} \max_{x\in\cX_t} \langle x-x_t,\th^*\rangle \\
    & \leq \sum_{t=1}^{t_0-1} \max_{x\in\cX_t} \|x-x_t\|_2\|\th^*\|_2\\
    & \leq \sum_{t=1}^{t_0-1} 2S_*\\
    & = 2(t_0-1)S_*.
\end{align*}

 Now consider the second term for $t\geq t_0$. Since $\lambda_t^{1/2}S_*\leq R\sqrt{d},$  we can adapt the analysis of Theorem 3 from \cite{ay11improved}. Adapting Theorem 2 of \cite{ay11improved} for our problem we have the following 
 
\begin{theorem} \label{Thm: Confidence_ell}
Define $t_0:=\min\{t\geq 1: \lambda_t^{1/2}S\leq R\sqrt{d}\}$, and recall $y_t=\langle x_t,\th^*\rangle+\eta_t$. Then, for any $\delta>0$ with $\delta_t = \frac{1}{t^2}\frac{6}{\pi^2}\delta$, we have, with probability at least $1-\delta$,
\[\forall t \ge t_0,\  \th^* \in \barC_t =\left\{\th^*\in\mathbb{R}^d:\ \|\hth_t-\th^*\|_{V_t(\lambda_t)} \leq R\sqrt{2\log(\frac{|V_t(\lambda_t)|^{1/2}|\lambda_t I|^{-1/2}}{\delta_t})}+R\sqrt{d}\right\}\]
where $\hat{\theta}_t= \arg\min_{\th}\sum_{k=1}^t(y_k-\langle x_k, \th\rangle)^2+{\lambda_{t}}\|\theta\|_2^2$. 
\end{theorem}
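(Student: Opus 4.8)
The plan is to obtain this statement from the standard self‑normalized tail bound of \citet[Theorem 2]{ay11improved}, applied \emph{pointwise in $t$} with the regularizer $\lambda_t$ that is actually in force at step $t$, and then to glue the per‑step statements together by a union bound calibrated so that $\sum_{t\ge1}\dt_t=\dt$. The critical structural fact driving the whole argument is already isolated in the theorem statement: once $t\ge t_0$ we have $\lambda_t^{1/2}S_*\le R\sqrt d$, so the usual $\lambda^{1/2}\|\th^*\|$ term in the confidence width can be traded for the norm‑free quantity $R\sqrt d$.

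First I would recall the relevant fact: for a \emph{fixed} regularizer $\lambda>0$ and a \emph{fixed} level $\dt'\in(0,1)$, \citet[Theorem 2]{ay11improved} gives that, with probability at least $1-\dt'$, simultaneously for all $t\ge0$,
\[\|\hth_t-\th^*\|_{V_t(\lambda)} \le R\sqrt{2\log\del{\frac{|V_t(\lambda)|^{1/2}|\lambda I|^{-1/2}}{\dt'}}} + \lambda^{1/2}\|\th^*\|,\]
where $\hth_t$ is the ridge estimator with regularizer $\lambda$; this uses only that $\eta_t$ is conditionally $R^2$‑sub‑Gaussian given $x_{1:t},\eta_{1:t-1}$ and that $x_t$ is predictable, both of which hold in our bandit protocol. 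Now for each fixed $t\ge1$ I invoke this with $\lambda=\lambda_t$ and $\dt'=\dt_t$ and keep only the single instance at index $t$ of the resulting uniform event: letting $A_t$ be the event $\{\|\hth_t-\th^*\|_{V_t(\lambda_t)} \le R\sqrt{2\log(|V_t(\lambda_t)|^{1/2}|\lambda_t I|^{-1/2}/\dt_t)}+\lambda_t^{1/2}S_*\}$, we get $\PP(A_t^c)\le\dt_t$, and here $\hth_t$ is exactly the estimator used by NAOFUL because its regularizer is $\lambda_t$. A union bound over $t\ge t_0$ (in fact over all $t\ge1$) together with $\sum_{t\ge1}\dt_t=\dt\cdot\tfrac{6}{\pi^2}\sum_{t\ge1}t^{-2}=\dt$ shows that, with probability at least $1-\dt$, $A_t$ holds for every $t\ge t_0$. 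Finally, on $A_t$ with $t\ge t_0$, monotonicity of $\{\lambda_t\}$ gives $\lambda_t^{1/2}S_*\le\lambda_{t_0}^{1/2}S_*\le R\sqrt d$ by the definition of $t_0$, so the $\lambda_t^{1/2}S_*$ term may be replaced by $R\sqrt d$, which is precisely $\th^*\in\barC_t$.

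The only genuinely delicate point — and the one I would be most careful about — is that we are invoking the self‑normalized bound with a \emph{different} weighting matrix $V_t(\lambda_t)$ at each step, whereas the martingale argument of \citet{ay11improved} is usually phrased for one fixed $\lambda$. This is not an obstacle: for each fixed $t$ the inequality concerns the fixed random vector $S_t=\sum_{k=1}^t\eta_k x_k$ reweighted by $V_t(\lambda_t)^{-1}$, and since $\lambda_t$ is a deterministic constant the hypotheses of \citet[Theorem 2]{ay11improved} are met verbatim; we simply never use the ``for all $t$'' strength of that theorem, only its validity at the single index at which it is applied, after which the union bound over $t$ handles uniformity. One should also check the routine reduction $\hth_t-\th^*=V_t(\lambda_t)^{-1}(S_t-\lambda_t\th^*)$ combined with $\|\th^*\|_{V_t(\lambda_t)^{-1}}\le\lambda_t^{-1/2}\|\th^*\|$, which is what converts the bound on $\|S_t\|_{V_t(\lambda_t)^{-1}}$ into the stated ridge‑estimator bound; this is standard and introduces no new difficulty.
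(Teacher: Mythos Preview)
Your proposal is correct and matches the paper's approach exactly: apply \citet[Theorem~2]{ay11improved} pointwise at each $t$ with the pair $(\lambda_t,\dt_t)$, replace $\lambda_t^{1/2}S_*$ by $R\sqrt d$ via the definition of $t_0$, and take a union bound using $\sum_{t\ge1}\dt_t=\dt$. If anything, you are more explicit than the paper about the subtlety that the time-varying $\lambda_t$ forces one to use only the single-index instance of the self-normalized bound before union-bounding, and about where exactly the condition $t\ge t_0$ enters.
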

\begin{proof}
We know from Theorem 3 from \cite{ay11improved} that for any fixed $\lambda_t$ and $\delta_t=\frac{1}{t^2}\frac{6}{\pi^2}>0$ that with probability at least $1-\dt$, for all $t\geq t_0$ 
\[\th^* \in \barC_t =\left\{\th^*\in\mathbb{R}^d:\ \|\hat{\theta}_t-\th^*\|_{V_t(\lambda_t)} \leq R\sqrt{2\log(\frac{|V_t(\lambda_t)|^{1/2}|\lambda_t I|^{-1/2}}{\delta_t})}+R\sqrt{d}\right\}.\]
Define $E =\cap_{t=t_0}^T \barC_t$. We will show that with probability at least $1-\delta$ and for any $t\geq t_0$ and for a decreasing sequence of $\lambda_t$ and $\delta_t= \frac{1}{t^2}\frac{6}{\pi^2}\delta$ that $P(E)\geq 1- \delta$. Indeed, 
\begin{align*}
    P(E^c)&= P( \cup_{t=t_0}^T {\barC_t}^c)\\
    & \leq \sum_{t=t_0}^T P({\barC_t}^c) \tag{By the Union Bound}\\
    & \leq \sum_{t=t_0}^T \delta_t\\
    & \leq \sum_{t=t_0}^T \frac{1}{t^2}\frac{6}{\pi^2}\delta\\
    & \leq \delta \tag{Since $\sum_{i=1}^\infty \frac{1}{i^2}=\frac{\pi^2}{6}$}
\end{align*}

\end{proof}
Therefore, with $\lambda_t^{1/2}S_*\leq R\sqrt{d}$, using Theorem \ref{Thm: Confidence_ell} where $\sqrt{\bar{\beta}_{t-1}(\delta)}= R\sqrt{2\log(\frac{|V_t(\lambda_t)|^{1/2}|\lambda_t I|^{-1/2}}{\delta_t})}+R\sqrt{d}$
\begin{align*}
    r_t & = \max_{x\in\cX_t} \la x, \th^*\ra - \la x_t, \th^*\ra\\
    & \leq \langle x_t,\hth_t\rangle+ \sqrt{\bar{\beta}_{t-1}(\delta)}\|x_t\|_{V_{t-1}^{-1}(\lambda_t)} -\langle x_t,\th^*\rangle\\
    & \sr{(a)}{\le} \langle x_t,\hth_t\rangle+ \sqrt{\bar{\beta}_{t-1}(\delta)}\|x_t\|_{V_{t-1}^{-1}(\lambda_t)} -\langle x_t,
    \hat{\theta}_t\rangle +\sqrt{\bar{\beta}_t(\delta)}\|x_t\|_{V_{t-1}^{-1}(\lambda_t)}\\
    &\leq 2\sqrt{\bar{\beta}_{t-1}(\delta)}\|x_t\|_{V_{t-1}^{-1}(\lambda_t)}\\
    & \sr{(b)}{\le} 2\sqrt{\bar{\beta}_{t-1}(\delta)}\|x_t\|_{V_{t-1}^{-1}(\lambda_T)}.
\end{align*}
$(a)$ comes from using the confidence ellipsoid such that $\langle x_t,\hat{\theta}_t\rangle\leq \langle x_t,\theta^*\rangle+\sqrt{\bar{\beta}_t(\delta)}\|x_t\|_{V_{t-1}^{-1}(\lambda_t)}$. $(b)$ follows since $\lambda_t$ is a decreasing sequence in $t,$ this implies that $\lambda_t\geq \lambda_T>0$ for all $t$ and in turn, $\|x_t\|_{V_{t-1}^{-1}(\lambda_t)}\leq \|x_t\|_{V_{t-1}^{-1}(\lambda_T)}$ and
 $\bar{\beta}_T(\delta)\geq 1+\beta_t(\delta)\geq 1$.
For $t_0\geq 1$, let $H_T=\{t\in \{t_0,\dots,T\}: \|x_t\|_{V_{t-1}^{-1}(\lambda_T)}^2> 1\}$ and let $\barH_T := \{t_0,\ldots,T\} \sm H_T$ and $(T-t_0)=|\barH_T|+|H_T|$, it follows
\begin{align*}
    \sum_{t=t_0}^T r_t&\leq \sum_{t=t_0}^T r_t\{\|x_t\|_{V_{t-1}^{-1}(\lambda_T)}^2\leq1\}+  \sum_{t=t_0}^T r_t\{\|x_t\|_{V_{t-1}^{-1}(\lambda_T)}^2>1\}\\
    & \leq \sqrt{|\barH_T|\sum_{t\in \barH_{T}} r_t^2}+\sum_{t\in H_T} 2S_* \\
    & \leq 2\sqrt{|\barH_{T}|\sum_{t\in J\barH_{T}}\bar{\beta}_{T}(\delta)\|x_t\|^2_{V_{t-1}^{-1}(\lambda_T)}}+2S_*|H_T| \\
    & \sr{(c)}{\le} 2\sqrt{(T-(t_0-1))\bar{\beta}_T(\delta)}\sqrt{2d\log\left(1+\frac{T-(t_0-1)}{d\lambda_T}\right)}+4S_*\frac{d}{\log(2)}\log\left(1+\frac{1}{\lambda_T\log(2)}\right)
\end{align*} 
where $\sqrt{\bar{\beta}_T(\delta)}= R\sqrt{2d\log\left(\frac{T^2\pi^2 (1+T/\lambda_T)}{6\delta}\right)}+R\sqrt{d}$ and $(c)$ follows from Lemma \ref{lem:epc} and the elliptical potential lemma.
Therefore, summing the bounds, $\Reg_T=\sum_{t=1}^Tr_t=\sum_{t=1}^{t_0-1}r_t+\sum_{t=t_0}^{T}r_t$ we obtain our desired results.
\end{proof}

\begin{proof}[\textbf{Proof of Corollary~\ref{cor:changing}}]
For each case let $t_0=\min\{t\geq 1: \lambda_t^{1/2}S\leq R\sqrt{d}\}$.
\begin{itemize}
    \item[1.] Consider $\lambda_t=\frac{R^2 d}{\log^\gam(1+t)},$ for $\gam>0$, $ t \geq 0$
then,  \[t_0=\exp(S_*^{2/\gamma})-1\]
Now we insert into the regret bound from Theorem \ref{Thm: Confidence_ell} and it follows that w.p at least $1-\dt,$
\begin{align*}
&\Reg_T\leq 2\exp(S_*^{2/\gamma})S_*\\
&+\sqrt{T+2}\left(R\sqrt{2d\log\left(\frac{T^2\pi^2(1+\fr{T\log^\gam (1+T)}{dR^2})}{6\delta}\right)}+R\sqrt{d}\right)\sqrt{2d\log\left(1+\frac{\log^\gam (1+T)(T-\exp(S_*^{2/\gam})+2)}{R^2 d^2}\right)}\\&+4S_*\frac{d}{\log(2)}\log\left(1+\frac{\log^\gam(1+T)}{d^2 R^2\log(2)}\right)\\
&\leq \tcO(Rd\sqrt{T} + S_*(d + \exp(S_*^{2/\gam}))).
\end{align*}

\item[2.] Consider $\lambda_t=\frac{R^2 d}{t^\alpha}$ where for this case \[t_0= S_*^{2/\alpha}.\] Now we insert into the regret bound from Theorem \ref{Thm: Confidence_ell} and it follows that w.p at least $1-\dt,$ \begin{align*}
\Reg_T\leq &2S_*^{2/\alpha+1}+\sqrt{T-S_*^{2/\alpha}+1}\left(R\sqrt{2d\log\left(\frac{T^2\pi^2(1+T^{\alpha+1}/(dR^2))}{6\delta}\right)}+R\sqrt{d}\right)\sqrt{2d\log\left(1+\frac{T^\alpha(T-S^{2/\alpha}+1)}{R^2 d^2}\right)}\\&+4S_*\frac{d}{\log(2)}\log\left(1+\frac{T^\alpha}{R^2d^2\log(2)}\right)\\
&\leq \tcO(Rd\sqrt{(\alpha^2+1)T} + S_*(d  \alpha + S_*^{2/\alpha}))
\end{align*}
\item[3.] Consider $\lambda_t=\frac{R^2 d}{\exp(t^q)}$ for $q\in(0,1)$, then 

\[t_0=\log^{1/q}(S_*^2).\]

Now we insert into the regret bound from Theorem \ref{Thm: Confidence_ell} and it follows that w.p at least $1-\dt,$ 
\begin{align*}
&\Reg_T
\\&\leq 2\log^{1/q}(S_*^2)S_*
  \\&\phantom{\le} + \sqrt{T}\left(R\sqrt{2d\log\left(\frac{T^2\pi^2(1+T \exp{(T^q)}/(dR^2))}{6\delta}\right)}+R\sqrt{d}\right)\sqrt{2d\log\left(1+\frac{\exp(T^q)(T-\log^{1/q}(S_*^2)+1)}{R^2 d^2}\right)}
  \\&\phantom{\le} + 4S_*\frac{d}{\log(2)}\log\left(1+\frac{\exp(T^q)}{R^2d^2\log(2)}\right)
\\&\leq \tcO(Rd\sqrt{T^{1+q}}+S_*(d T^q + \log^{1/q}(S_*^2))).
\end{align*}

\end{itemize}

\end{proof}

\subsection{Elliptical Potential Count Lemma}
We present our our elliptical potential count lemma in Lemma~\ref{lem:epc} below, which can also be found in a parallel work of~\citet{kim21improved}.
We also show a tight and easy derivation for bounding implicit inequalities like $X \le A \log(1 + BX)$

The original form of Lemma~\ref{lem:epc} appears in~\citet[Exercise 19.3]{lattimore20bandit}, but Lemma~\ref{lem:epc} has an improved leading constant and has an arguably more streamlined proof.
We remark that a similar result appeared earlier in \citet[Proposition 3]{russo13eluder} as well.

\begin{lemma}\label{lem:epc} 
(Elliptical potential count)
Let $x_1,\dots,x_T\in\mathbb{R}^d$ be a sequence of vectors with $\|x_t\|_2\leq 1$ for all $t\in [T]$. Let $V_t(\lambda)=\lambda I+\sum_{s=1}^t x_sx_s^\T$, and define the following, $H_T=\{t\in [T]: \|x\|_{V_{t-1}^{-1}(\lambda)}^2> 1\}$ and denote the size of $H_T$ by $|H_T|$, then for $t\in [T]$ \[|H_T|\leq \frac{2d}{\log(2)}\log\left(1+\frac{1}{\lambda\log(2)}\right).\]
\end{lemma}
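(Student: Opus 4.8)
The plan is to control the number of ``big'' steps --- those $t$ with $\|x_t\|^2_{V_{t-1}^{-1}(\lambda)} > 1$ --- via the elliptical potential $\log|V_T(\lambda)| - \log|V_0(\lambda)| = \sum_{t=1}^T \log(1 + \|x_t\|^2_{V_{t-1}^{-1}(\lambda)})$, which telescopes and is a standard identity. First I would observe that whenever $t \in H_T$, the increment $\log(1 + \|x_t\|^2_{V_{t-1}^{-1}(\lambda)})$ exceeds $\log 2$, so
\[
|H_T| \log 2 \;\le\; \sum_{t\in H_T} \log\bigl(1 + \|x_t\|^2_{V_{t-1}^{-1}(\lambda)}\bigr) \;\le\; \sum_{t=1}^T \log\bigl(1 + \|x_t\|^2_{V_{t-1}^{-1}(\lambda)}\bigr) \;=\; \log\frac{|V_T(\lambda)|}{|\lambda I|}~.
\]
Next I would upper bound $\log(|V_T(\lambda)|/|\lambda I|)$. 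Using $\|x_t\| \le 1$, each $x_t x_t^\T$ adds at most $1$ to each eigenvalue, and by the AM--GM / trace bound $|V_T(\lambda)| \le (\lambda + T/d)^d$, giving $\log(|V_T(\lambda)|/|\lambda I|) \le d\log(1 + T/(\lambda d))$. This already yields a bound, but it depends on $T$, which is not what the lemma claims.

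The key trick --- and the step I expect to be the main obstacle --- is to remove the $T$-dependence and get the self-referential bound $|H_T| \le \frac{2d}{\log 2}\log(1 + \frac{1}{\lambda \log 2})$. The point is that only the steps in $H_T$ can contribute substantially to the log-determinant growth: for $t \notin H_T$ we have $\|x_t\|^2_{V_{t-1}^{-1}(\lambda)} \le 1$, but more importantly one can argue (as in the standard elliptical-potential-count argument) that the total potential increase is governed by $|H_T|$ itself. Concretely, restrict attention to the subsequence of ``big'' steps: letting these be $t_1 < \dots < t_m$ with $m = |H_T|$, the matrices $W_j := \lambda I + \sum_{i<j} x_{t_i}x_{t_i}^\T$ satisfy $W_j \preceq V_{t_j - 1}(\lambda)$, hence $\|x_{t_j}\|^2_{W_j^{-1}} \ge \|x_{t_j}\|^2_{V_{t_j-1}^{-1}(\lambda)} > 1$, so
\[
m \log 2 \;\le\; \sum_{j=1}^m \log\bigl(1 + \|x_{t_j}\|^2_{W_j^{-1}}\bigr) \;=\; \log\frac{|W_{m+1}|}{|\lambda I|} \;\le\; d\log\Bigl(1 + \frac{m}{\lambda d}\Bigr) \;\le\; d\log\Bigl(1 + \frac{m}{\lambda}\Bigr)~.
\]
This is now a genuinely self-referential inequality of the form $X \le A\log(1 + BX)$ with $X = m$, $A = d/\log 2$, $B = 1/\lambda$.

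Finally I would invoke the elementary lemma (announced in the paper as the ``tight and easy derivation for bounding implicit inequalities'') that $X \le A\log(1+BX)$ implies $X \le 2A\log(1 + AB)$, provided $AB \ge$ some constant; one proves this by noting that if $X > 2A\log(1+AB)$ then $A\log(1+BX) < A\log(1 + 2AB\log(1+AB)) \le 2A\log(1+AB) \le X$, a contradiction, using $\log(1 + 2u\log(1+u)) \le 2\log(1+u)$ for $u \ge 0$. Substituting $A = d/\log 2$ and $B = 1/\lambda$ gives $m \le \frac{2d}{\log 2}\log\bigl(1 + \frac{d}{\lambda \log 2}\bigr)$, which is essentially the claimed bound (and matches it up to whether the $d$ inside the log is present; I would double-check the exact constants, since the statement has $\log(1 + \frac{1}{\lambda\log 2})$ --- this likely comes from a slightly sharper determinant bound $|W_{m+1}| \le \prod(\lambda + \text{something})$ together with $\|x_t\|\le 1$ forcing the relevant quantity, and I would reconcile the off-by-$d$ in the argument of the logarithm at that point). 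The main delicacy throughout is making the self-referential step rigorous and getting the implicit-inequality lemma's constants clean enough to land exactly on the stated bound.
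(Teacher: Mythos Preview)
Your approach is essentially the paper's: restrict to the subsequence of ``big'' steps, use $W_j \preceq V_{t_j-1}(\lambda)$ to keep $\|x_{t_j}\|_{W_j^{-1}}^2 > 1$, combine the lower bound $|W_{m+1}|\ge \lambda^d 2^m$ with the AM--GM upper bound $|W_{m+1}|\le (\lambda + m/d)^d$ to get the self-referential inequality, and then solve it. The paper does exactly this (calling your $W$ matrix $M_T$).

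Two small corrections worth noting. First, do \emph{not} weaken $d\log\bigl(1+\tfrac{m}{\lambda d}\bigr)$ to $d\log\bigl(1+\tfrac{m}{\lambda}\bigr)$: keeping $B=1/(\lambda d)$ gives $AB=\tfrac{d}{\log 2}\cdot\tfrac{1}{\lambda d}=\tfrac{1}{\lambda\log 2}$, which is precisely where the $\log\bigl(1+\tfrac{1}{\lambda\log 2}\bigr)$ in the statement comes from --- no ``sharper determinant bound'' is needed, and your off-by-$d$ inside the log disappears. Second, your contradiction sketch for the implicit inequality is backwards (if $X$ is larger then $A\log(1+BX)$ is larger, not smaller). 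The paper's device is instead to write $A\log(1+BX)=A\log(\eta X/A)+A\log\bigl(\tfrac{A}{\eta}(\tfrac{1}{X}+B)\bigr)$ and apply $\log u\le u-1$ to the first piece; with $\eta=\tfrac12$ this gives $X\le \tfrac{2d}{\log 2}\log\bigl(\tfrac{1}{\log 2}(\tfrac{d}{X}+\tfrac{1}{\lambda})\bigr)$, and a short case split on whether $|H_T|\gtrless d/\log 2$ finishes.
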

\begin{proof}
Let $M_T=V_T^{H_T}(\lambda)=\lambda I+\sum_{s=1}^{T}\one\{\|x_s\|_{V_{s-1}^{-1}}\geq 1\}x_sx_s^\T =\lambda I+\sum_{s\in H_T} x_sx_s^\T $. 
The determinant of $M_T$ is 
\[|M_T|=|\lambda I+\sum_{s\in H_T} x_sx_s^\T|=\Pi_{i=1}^d(\lambda+\lambda_i),\] where $\lambda_i$ are the eigenvalues of $\sum_{s\in H_T} x_sx_s^\T$. We now bound above, 
\begin{align*}
    |M_T|&=|\lambda I+\sum_{s\in H_T} x_sx_s^\T|\\
    &=\Pi_{i=1}^d(\lambda+\lambda_i)\\
    &\leq \left(\sum_{i=1}^d \left(\frac{\lambda+\lambda_i}{d}\right)\right)^d \tag{AM - GM}\\
    & = \left(\frac{1}{d}\text{trace}(M_T)\right)^d\\
    & \leq \left(\frac{1}{d}(\text{trace}(V)+\text{trace}(\sum_{s=1}^T \one\{s\in H_T\}x_sx_s^\T)\right)^d\\
    & \leq \left(\frac{1}{d}(d\lambda+\sum_{s=1}^T \one\{s\in H_T\}x_sx_s^\T)\right)^d\\
    &\leq \left(\frac{d\lambda+|H_T|}{d}\right)^d
\end{align*}
To get a lower bound, we start with the following form of $M_T$, as shown in \citet[proof of Lemma 11]{ay11improved} ,
\begin{align*}
    |M_T|&=|\lam I|\Pi_{k\in H_T}(1+\|x_k\|_{M_{k-1}^{-1}}^2)\\
    & \geq |\lam I|\Pi_{k\in H_T}(1+\|x_k\|_{V_{k-1}^{-1}(\lam)}^2)\\
    & = \lam^d \Pi_{k\in H_T}(1+\|x_k\|_{V_{k-1}^{-1}(\lam)}^2)\\
    & \geq \lambda^d 2^{|H_T|}. \tag{Since $\|x_k\|_{V_{k-1}^{-1}(\lam)}\geq 1$ for all $k\in H_T$ and $\Pi_{i=1}^n 2=2^n$} 
\end{align*}
The first inequality is true because $\|x_k\|^2_{V_{k-1}^{-1}(\lam)}$ is a nonincreasing sequence in $k$; i.e., $\|x_k\|_{M_{k-1}^{-1}}\geq\|x_k\|_{V_{k-1}^{-1}(\lam)}.$
Thus, we have 
\[\lambda^d 2^{|H_T|}\leq \left(\frac{d\lambda+|H_T|}{d}\right)^d.\]
Taking the log of both sides and rearranging we obtain
\[|H_T|\leq \frac{d}{\log(2)}\log\left(1+\frac{|H_T|}{\lambda d}\right).\]
Using Lemma~\ref{lem:solvelog} with $\eta=1/2$, $A=\frac{d}{\log(2)}$, $B=\frac{1}{d\lambda}$, and $X=|H_T|$, 
\[|H_T|\leq \frac{2d}{\log(2)}\log\left(\frac{2d}{2\log(2)}\left(\frac{1}{|H_T|}+\frac{1}{\lambda d}\right)\right)=\frac{2d}{\log(2)}\log\left(\frac{1}{\log(2)}\left(\frac{d}{|H_T|}+\frac{1}{\lambda }\right)\right).\]
We now consider two cases; when for some value $c>0$, $\frac{|H_T|}{d}\leq c$ and $\frac{|H_T|}{d}\geq c.$\\
\textbf{Case 1:} Suppose that $|H_T|<dc$. Then it follows that
\[|H_T|\leq 2d\log\left(1+\frac{|H_T|}{d\lambda}\right)\leq 2d\log\left(1+\frac{c}{\lambda}\right)\]

\textbf{Case 2:} Suppose that $|H_T|\geq dc$ then 
 \begin{align*}
    |H_T|&\leq \frac{2d}{\log(2)}\log\left(\frac{1}{\log(2)}\left(\frac{d}{|H_T|}+\frac{1}{\lambda }\right)\right)\\
    & \leq \frac{2d}{\log(2)}\log\left(\frac{1}{\log(2)}\left(\frac{1}{c}+\frac{1}{\lambda }\right)\right)
\end{align*}
Choose $c=\frac{1}{\log(2)}$ and since the right hand side is independent of $t$, then for up to time $T$ for all cases we have that
\[|H_T|\leq \frac{2d}{\log(2)}\log\left(1+\frac{1}{\log(2)\lambda }\right).\]
\end{proof}

We now present Lemma~\ref{lem:solvelog} that is a key inequality used in the proof above.
We remark that one can make the leading constant 1 by applying the bound to the RHS of $X \le A \log(1+BX)$:
\begin{align*}
    X\leq A \log\del{1+B\cd \del{\inf_{\eta\in(0,1)}\frac{1}{1-\eta}A\log\left(\frac{A}{2\eta}\left(\frac{1}{X}+B\right)\right)}}
\end{align*}

\begin{lemma}\label{lem:solvelog}
Let $X,A,B\geq 0$.
Then, $X\leq A \log(1+BX)$ implies that 
\begin{align*}
    X\leq \inf_{\eta\in(0,1)}\frac{1}{1-\eta}A\log\left(\frac{A}{2\eta}\left(\frac{1}{X}+B\right)\right)
\end{align*}
\end{lemma}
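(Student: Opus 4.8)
The plan is to convert the implicit inequality $X \le A\log(1+BX)$ into the claimed explicit bound by a self-improving substitution parametrized by $\eta\in(0,1)$. First I would dispose of the degenerate cases: if $A=0$ or $B=0$, the hypothesis forces $X=0$ and the conclusion is vacuous (its right-hand side is $+\infty$ when $X=0$, since $1/X=+\infty$), so I may assume $X,A,B>0$ throughout.

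The argument rests on the elementary fact that for every $\eta\in(0,1)$ and every $z>0$,
\[
\log z \;\le\; \frac{z^{\eta}}{e\eta} \;\le\; \frac{z^{\eta}}{2\eta},
\]
obtained by maximizing $z\mapsto z^{-\eta}\log z$ over $z>0$ (unique critical point $z=e^{1/\eta}$, global maximum value $1/(e\eta)$) and then using $e>2$. Taking $z=1+BX$ and combining with the hypothesis gives the auxiliary estimate $X \le A\log(1+BX) \le \frac{A}{2\eta}(1+BX)^{\eta}$, which is equivalent to $X(1+BX)^{1-\eta} \le \frac{A}{2\eta}(1+BX)$.

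Next I would exponentiate the hypothesis after raising to the power $1-\eta$: from $X/A \le \log(1+BX)$ we get $\exp\!\big(\tfrac{(1-\eta)X}{A}\big) \le (1+BX)^{1-\eta}$. Multiplying through by $X>0$ and then applying the auxiliary estimate,
\[
X\exp\!\Big(\tfrac{(1-\eta)X}{A}\Big) \;\le\; X(1+BX)^{1-\eta} \;\le\; \frac{A}{2\eta}\,(1+BX).
\]
Dividing by $X$, using $(1+BX)/X = \tfrac1X + B$, and taking logarithms yields $\tfrac{(1-\eta)X}{A} \le \log\!\big(\tfrac{A}{2\eta}(\tfrac1X+B)\big)$, i.e. exactly the claimed bound for that particular $\eta$. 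Since $\eta\in(0,1)$ was arbitrary and each choice produces a valid upper bound on $X$, taking the infimum over $\eta$ completes the proof.

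The only point requiring genuine care is the sharp constant: the crude bound $\log z \le z^{\eta}/\eta$ (coming from $\log w\le w-1$) would leave $1/\eta$ rather than the stated $1/(2\eta)$ in the conclusion, so the step I would be most careful about is the one-variable optimization giving $\log z \le z^{\eta}/(e\eta)$ together with the slack $e>2$ that upgrades it to $1/(2\eta)$. Everything else is monotonicity of $\exp$ and $\log$ and routine algebra, so I do not anticipate any further obstacle.
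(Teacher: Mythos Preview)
Your proof is correct, but it takes a different route from the paper. The paper's argument is a two-line algebraic trick: for any $\eta\in(0,1)$ write
\[
A\log(1+BX)=A\log\!\Big(\frac{\eta X}{A}\Big)+A\log\!\Big(\frac{A}{\eta}\Big(\frac1X+B\Big)\Big),
\]
bound the first summand via $\log u\le u-1$ to get $X(1-\eta)\le -A+A\log\!\big(\tfrac{A}{\eta}(\tfrac1X+B)\big)$, and then absorb the $-A$ by $-1\le -\log 2$ (so in fact the natural constant is $e\eta$, weakened to $2\eta$). Your approach instead proves the calculus lemma $\log z\le z^{\eta}/(e\eta)$ by optimizing $z^{-\eta}\log z$, combines it with the exponentiated hypothesis $e^{(1-\eta)X/A}\le(1+BX)^{1-\eta}$, and chains the two. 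Both routes land on the same constant $e\eta$ before relaxing to $2\eta$; the paper's is shorter and uses only $\log u\le u-1$, while yours isolates a reusable inequality $\log z\le z^{\eta}/(e\eta)$ that may be handy elsewhere. Your handling of the degenerate cases and the remark about why the cruder bound $\log z\le z^{\eta}/\eta$ would lose the factor $2$ are both fine.
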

\begin{proof}
Let $\eta \in (0,1)$
\begin{align*}
    &X\leq A \log(1+BX)\\
    &\Rightarrow X\leq A\log(\frac{\eta X}{A})+A\log\left(\frac{A}{\eta}\left(\frac{1}{X}+B\right)\right)\\
    &\Rightarrow X \leq A(\frac{\eta X}{A}-1)+A\log\left(\frac{A}{\eta}\left(\frac{1}{X}+B\right)\right) \tag{since $\log(x)\leq x-1$}\\
    &\Rightarrow X(1-\eta)\leq A\log\left(\frac{A}{2\eta}\left(\frac{1}{X}+B\right)\right) \\
    & X\leq \frac{1}{1-\eta}A\log\left(\frac{A}{2\eta}\left(\frac{1}{X}+B\right)\right) \tag{since $\eta<1$}
\end{align*}
\end{proof}

Note that the technique shown above is more generic and can be used to solve implicit inequalities like $X \le O(\log(X))$.\footnote{
  
}
For example, it is easy to show that
\begin{align*}
    X \le A \log(X) + C \implies X \le \inf_{\eta\in(0,1)} \frac{1}{1-\eta} \del{A \log\del{\frac{A}{\eta e}} + C}
\end{align*}
In fact, one can apply the same technique to solve $X^p \le O(\log(X))$ where $p>0$.
The proof simply introduced a free parameter $\eta$ that is tight with the best choice of $\eta$.
This proof is much simpler than other known techniques such as~\citet[Appendix B]{antos10active}.

\section{Proof of Theorem \ref{thm:olsoful}}
\def\OLS{{\mathsf{OLS}}}

\begin{lemma}\label{lem: warmup}
In OLSOFUL (Algorithm~\ref{alg:olsoful}), the length of the warmup phase $\tau$ satisfies
\[\tau =\cO(d\ln(d))\]
\end{lemma}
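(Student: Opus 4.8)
The plan is to reconstruct the sandwich argument already sketched in Section~\ref{subsec:proof-olsoful} and fill in the quantitative details. Write $\om := |\cX_0| \le 2d$ for the number of arms returned by the BH/KY sampling phase, and recall $\tau$ is the time step at which the second warmup loop terminates, i.e. the first time $\max_{x\in\cX}\|x\|_{\barV_{\tau}^{-1}}^2 \le 1$. The key quantity to control is $\ln|\barV_\tau| - \ln|\barV_\om|$, which I will bound from above and below.

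For the upper bound, first note that since every $x_t$ in the second warmup loop was chosen with $\|x_t\|^2_{\barV_{t-1}^{-1}} > 1$, the telescoping identity $\ln|\barV_\tau| - \ln|\barV_\om| = \sum_{t=\om+1}^{\tau} \ln(1 + \|x_t\|^2_{\barV_{t-1}^{-1}}) > (\tau - \om)\ln 2$ gives a \emph{lower} bound in terms of $\tau - \om$. For the \emph{upper} bound on $\ln|\barV_\tau|$, invoke the G-optimal design connection: letting $\pi^* = \arg\min_{\pi\in\Delta^{|\cX|}}\max_{x\in\cX}\|x\|^2_{\barV(\pi)^{-1}}$, by \citet{kiefer60theequivalence} $\pi^*$ also maximizes $|\barV(\pi)|$, so $|\tfrac1\tau \barV_\tau| \le |\barV(\pi^*)|$, hence $\ln|\barV_\tau| \le d\ln\tau + \ln|\barV(\pi^*)|$. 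Similarly $\ln|\barV_\om| = d\ln\om + \ln|\tfrac1\om\barV_\om|$, and by \citet[Theorem 4.2]{kumar05minimum} the first warmup stage guarantees $\ln|\tfrac1\om\barV_\om| \ge \ln|\barV(\pi^*)| - cd\ln d$ for an absolute constant $c$. Combining, $\ln|\barV_\tau| - \ln|\barV_\om| \le d\ln\tau - d\ln\om + cd\ln d$.

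Putting the two bounds together yields $(\tau-\om)\ln 2 \le d\ln(\tau/\om) + cd\ln d \le d\ln\tau + cd\ln d$, an implicit inequality of the form $\tau \le A\ln\tau + C$ with $A = d/\ln 2$ and $C = \om + (cd\ln d + A\ln 2 \cdot \text{(const)})/\ln 2 = \cO(d\ln d)$ (using $\om\le 2d$). Applying the self-bounding trick of Lemma~\ref{lem:solvelog} (in its $X \le A\log X + C$ form stated after that lemma) gives $\tau \le \tfrac{1}{1-\eta}(A\log(A/(\eta e)) + C)$ for any $\eta\in(0,1)$; taking e.g.\ $\eta = 1/2$ and simplifying yields $\tau = \cO(d\ln d)$, which is the claim.

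The main obstacle — really the only nontrivial point — is verifying that \citet[Theorem 4.2]{kumar05minimum} applies in the form I need, namely that the BH algorithm's output $\cX_0$ produces a normalized determinant $|\tfrac1\om\barV_\om|$ within a $e^{\cO(d\ln d)}$ factor of the optimal $|\barV(\pi^*)|$; this requires matching their ellipsoid-approximation guarantee (they bound the volume ratio of the Löwner–John-type ellipsoid produced by the algorithm) to the G-optimal design determinant, and being careful that the at-most-$2d$ arms are weighted uniformly rather than by an optimal $\pi$. Everything else is routine: the telescoping lower bound, the Kiefer–Wolfowitz equivalence, and the implicit-inequality solve are all either standard or already established in the excerpt.
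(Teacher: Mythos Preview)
Your proposal is correct and follows essentially the same route as the paper: the same sandwich on $\ln|\barV_\tau|-\ln|\barV_\om|$ via the telescoping lower bound, the Kiefer--Wolfowitz upper bound, and the \citet[Theorem~4.2]{kumar05minimum} determinant gap, then solving the resulting implicit inequality. The only cosmetic difference is that the paper resolves $\tau \le A\ln\tau + C$ by applying $\ln(x)\le x-1$ directly to $\ln(\tau/(2d))$ rather than invoking Lemma~\ref{lem:solvelog}, but this is the same trick (indeed Lemma~\ref{lem:solvelog} is proved via $\log(x)\le x-1$).
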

\begin{proof} 
Define $\blue{\barV(\pi)}:=\sum_{x\in\cX} \pi_x x x^\T$ where $\pi\in\Delta^{|\cX|}$.
Recall that $\tau$ is the length of the warmup phase.
Let $\blue{\om} := |\cX_0|$ be the number of time steps spent on the first stage of the warmup procedure. 
Let $\blue{\pi_\om} \in \Delta^K$ have $1/(2d)$ for $x \in \cX_0$ and $0$ otherwise. Let
   \[\barV_\tau= \barV_\om+\sum_{s=\om+1}^\tau x_s x_s^\T\] 
where $\barV_\om := 2d\cdot \barV(\pi_\om)= \sum_{x\in \cX_0} xx^\T$. 
Thus, $\barV_\om$ is the initial matrix for $\barV_\tau$ defined by the $2d$ samples used in Algorithm \ref{alg:olsoful}. 

Let $\pi^* = \arg\min_{\pi \in \Delta^{|\cX|}} \max_{x\in\cX}  \|x\|^2_{\barV(\pi)}$, the solution of the G-optimal design problem.
By~\citet{kiefer60theequivalence}, $\pi^* = \arg\max_{\pi \in \Delta^{|\cX|}} \max_{x\in\cX} |\barV(\pi)|$ as well. This implies that $|\barV_\tau| = \tau^d|\fr{1}{\tau}\barV_\tau| \le \tau^d|\barV(\pi^*)|$.
Then, $|\barV_{\om}|  = \om^d |\fr1{\om} \barV_{\om}|$.
According to proof of Theorem 4.2 of \cite{kumar05minimum}, we have
\begin{equation} \label{KYthm:4.2}
  \ln |\barV(\pi^*)| - \ln |\barV(\pi_\omega)| = \mathcal{O}(d\ln(d))
\end{equation}
which means that the first stage of the warmup allocates arms so the resulting (normalized) determinant is sufficiently close to the ideal allocation of $\pi^*$.

We will now show $\tau\leq 2d+\mathcal{O}(d\log(d))$, thus, using the bounds on the determinant of $\log|\barV_\tau|$ and $\log|\barV_\om|$  
\begin{align*}
  \ln |\barV_\tau|  - \ln | \barV_\om | &\le d\ln(\tau ) + \ln(|\barV(\pi^*)|)  -  \ln|\fr1\om \barV(\om)| - d \ln (\om)   
  \\&\le d\ln(\tau ) + O(d\ln(d)) - d\ln (\om) 
\end{align*}
where the last inequality is by (\ref{KYthm:4.2}).
On the other hand,
\begin{align*}
  \ln|\barV_{\tau}| - \ln|\barV_\omega|
  = \sum_{t=\omega+1}^{\tau} \ln(1 + \|x_t\|^2_{\barV_{t-1}^{-1}} )
  ~> (\tau-\omega)\ln(2)
\end{align*}
Therefore, we have
\begin{align*}
  (\tau-\omega)\ln(2) < \mathcal{O}(d\ln(d)) +  d\ln\left(\fr{\tau}{\om}\right).
\end{align*}
Dropping $\ln(2)$ on the LHS we can find $\tau.$ Thus,
\begin{align*}
    (\tau-\om)\log(2)&< \mathcal{O}(d\ln(d)) +  d\ln\left(\fr{\tau}{\om}\right)\\
    &\Rightarrow \tau<  \mathcal{O}(d\ln(d)) +  \fr{d}{\ln(2)}d\ln\left(\fr{2 d\tau}{2 d\om}\right) +\om\\
    &\Rightarrow \tau<  \mathcal{O}(d\ln(d)) + \fr{d}{\ln(2)}\ln\left(\fr{ \tau}{2 d}\right) +\fr{d}{\ln(2)}\ln\left(\fr{2d}{\om}\right) +2d \tag{$\om\leq 2d$}\\
    &\Rightarrow \tau<  \mathcal{O}(d\ln(d)) + \fr{d}{\ln(2)}\left(\fr{ \tau}{2d}-1\right) +\fr{d}{\ln(2)}\ln\left(\fr{2d}{\om}\right) +2d \tag{$\ln(x)\le x-1$ }\\
    &\Rightarrow \tau(\fr{2\ln(2)-1}{2\ln(2)})<  \mathcal{O}(d\ln(d)) +\fr{d}{\ln(2)}\ln\left(\fr{d}{\eta\om}\right) +2d\\
    & \Rightarrow \tau < \mathcal{O}(d\ln(d)) + \fr{4\ln(2)}{2\ln(2)-1}d+ \fr{4\ln(2)}{2\ln(2)-1}d \ln\left(\fr{d}{\eta\om}\right) = \cO(d\ln(d))~. 
\end{align*}
This concludes the proof.
\end{proof}

\begin{theorem}
\label{thm: FAS conf ellip}
Suppose $\cX\subset \RR^d$. Define $\barV_\tau:=\sum_{s=1}^{\tau} x_{s} x_{s}^\T$ where $\tau$ is the length of the warmup procedure. Then, for any $\delta>0$, we have, with at least probability $1-2\delta$ and for all $t\geq 0$ that
\[\th^* \in C_t =\left\{\th^*\in\mathbb{R}^d:\ 
    \|\hth_{t,0} - \th^*\|_{\barV_{t}}\leq \sqrt{\beta_{t}^\OLS}:= R \del{\sqrt{\ln\del{\fr{|\barV_t|}{|\barV_\tau|\dt^2}}} + \sqrt{2\ln(2)d + 4\ln(1/\dt)}}
\right\}\]
where $\hth_{t,0}$ is the estimator defined for $\tau+1,\dots,t$, such that 
\begin{equation}\label{eq:mle_fixxed_arm}
    \hth_{t,0} = \arg\min_{\th} \sum_{s=1}^t \fr12(y_s - x_s^\T\th)^2.
    \end{equation}
\end{theorem}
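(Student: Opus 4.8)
The plan is to write the estimation error of the OLS estimator in Eq.~\eqref{eq:mle_fixxed_arm} as a pure noise sum, split it according to whether a sample was collected during the warmup phase, and control the two pieces with separate high‑probability events whose union accounts for the $1-2\delta$. First I would record the normal equations. The KY warmup selects arms along $d$ linearly independent directions (in the BH subroutine each iteration enlarges the spanned subspace $\Psi$ by one dimension, as long as $\cX$ spans $\RR^d$, which we may assume without loss of generality), so $\barV_\tau = X_{\le\tau}^\T X_{\le\tau} \succ 0$ and hence $\barV_t \succeq \barV_\tau \succ 0$ for all $t \ge \tau$; moreover the warmup ignores the rewards, so $\tau$ and $X_{\le\tau}$ (hence $\barV_\tau$) are deterministic functions of $\cX$. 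The normal equations give $\barV_t(\hth_{t,0} - \th^*) = \sum_{s=1}^{t} x_s\eta_s$, hence $\|\hth_{t,0} - \th^*\|_{\barV_t} = \|\sum_{s=1}^{t} x_s\eta_s\|_{\barV_t^{-1}} \le \|X_{>\tau}^\T\eta_{>\tau}\|_{\barV_t^{-1}} + \|X_{\le\tau}^\T\eta_{\le\tau}\|_{\barV_t^{-1}}$ by the triangle inequality, which reduces the claim to bounding the two terms.

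For the post-warmup term, the arms $x_{\tau+1}, x_{\tau+2}, \dots$ are predictable with respect to the noise filtration and $\barV_\tau$ is a fixed positive definite matrix, so I would apply the self-normalized bound of~\citet[Theorem 1]{ay11improved} to the martingale $\sum_{s=\tau+1}^{t} x_s\eta_s$ with prior matrix $\barV_\tau$. Since $\barV_\tau + \sum_{s=\tau+1}^{t} x_s x_s^\T = \barV_t$, this gives, with probability at least $1-\delta$ and simultaneously for all $t \ge \tau$, $\|X_{>\tau}^\T\eta_{>\tau}\|_{\barV_t^{-1}}^2 \le 2R^2\ln(|\barV_t|^{1/2}|\barV_\tau|^{-1/2}/\delta) = R^2\ln(|\barV_t|/(|\barV_\tau|\delta^2))$, which is exactly the first term of the stated radius.

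For the warmup term, I would first use $\barV_t \succeq \barV_\tau$ to pass to the fixed norm, $\|X_{\le\tau}^\T\eta_{\le\tau}\|_{\barV_t^{-1}} \le \|X_{\le\tau}^\T\eta_{\le\tau}\|_{\barV_\tau^{-1}}$, and then note that $P := X_{\le\tau}\barV_\tau^{-1}X_{\le\tau}^\T$ is the orthogonal projection onto the column space of $X_{\le\tau}$, of rank $d$; writing $P = UU^\T$ with $U \in \RR^{\tau\times d}$ a deterministic matrix with orthonormal columns and letting $u_s \in \RR^d$ be the transpose of its $s$-th row, we get $\|X_{\le\tau}^\T\eta_{\le\tau}\|_{\barV_\tau^{-1}}^2 = \|U^\T\eta_{\le\tau}\|^2 = \|\sum_{s=1}^{\tau} u_s\eta_s\|^2$. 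Now I would apply~\citet[Theorem 1]{ay11improved} a second time, to the $\RR^d$-valued martingale $\sum_{s=1}^{t} u_s\eta_s$ with prior matrix $I_d$: since $\sum_{s=1}^{\tau} u_s u_s^\T = U^\T U = I_d$, the accumulated covariance at time $\tau$ equals $2I_d$, so with probability at least $1-\delta$, $\frac12\|\sum_{s=1}^{\tau} u_s\eta_s\|^2 \le 2R^2\ln(|2I_d|^{1/2}/\delta) = 2R^2(\frac{d}{2}\ln 2 + \ln(1/\delta))$, i.e., $\|X_{\le\tau}^\T\eta_{\le\tau}\|_{\barV_\tau^{-1}} \le R\sqrt{2\ln(2)d + 4\ln(1/\delta)}$, which is the second term of the stated radius. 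A union bound over the two events yields the claim with probability at least $1-2\delta$.

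I expect the warmup term to be the main obstacle: one cannot simply run the self-normalized bound over the whole trajectory with a vanishing regularizer, since that would reintroduce a $\sqrt{\lambda}\,\|\th^*\|$ (equivalently $\lambda^{-1}$) term and defeat the purpose of a norm-agnostic bound. The two ideas that resolve this are (i) $\barV_t \succeq \barV_\tau$, which lets us measure the warmup noise in the $\barV_\tau^{-1}$ norm, and (ii) the fact that $X_{\le\tau}\barV_\tau^{-1}X_{\le\tau}^\T$ is a rank-$d$ projection, which collapses the warmup quadratic form into a $d$-dimensional self-normalized martingale whose design is exactly the identity, producing the clean $2\ln(2)d + 4\ln(1/\delta)$ term with no dependence on $S_*$. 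A secondary point that must be checked throughout is measurability: the warmup selection must not depend on the rewards, so that $\tau$, $U$, and $\barV_\tau$ are legitimate fixed quantities in both invocations of the self-normalized bound.
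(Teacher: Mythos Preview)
Your proof is correct and follows the same overall strategy as the paper: split the OLS error into the post-warmup and warmup noise pieces, apply \citet[Theorem~1]{ay11improved} with prior $\barV_\tau$ for the first piece, apply it a second time for the second piece, and union-bound the two events. The only place you diverge is the handling of $\|X_{\le\tau}^\T\eta_{\le\tau}\|_{\barV_\tau^{-1}}$. The paper introduces a free scale $\sigma>0$, applies the self-normalized bound with prior $\sigma\barV_\tau$ (so the accumulated covariance is $(1+\sigma)\barV_\tau$ and the log-determinant ratio is $d\ln((1+\sigma)/\sigma)$), and then sets $\sigma=1$. You instead diagonalize via the rank-$d$ projection $X_{\le\tau}\barV_\tau^{-1}X_{\le\tau}^\T = UU^\T$ and apply the bound to the deterministic $d$-dimensional sequence $u_s$ with prior $I_d$. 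Both routes land on the identical constant $R\sqrt{2\ln(2)d + 4\ln(1/\delta)}$ and rest on the same key observation you flag at the end---that the warmup is reward-independent, so $\tau$, $\barV_\tau$, and $U$ are deterministic and legitimate as fixed inputs to the self-normalized bound. The difference is therefore cosmetic: your projection trick dispenses with the auxiliary parameter and its optimization, while the paper's $\sigma$-parametrization makes the provenance of the $2\ln 2$ factor slightly more visible.
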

\begin{proof}
We start by defining notation. Define $\tau$ to be the length of the warmup procedure. Let $\blue{\hth_{\tau,0}}$ be the MLE up to $\tau$. Let $X_{\le\tau} \in \RR^{\tau \times d}$ ($X_{>\tau} \in \RR^{(t-\tau) \times d}$) be the design matrix consisting of the pulled arms during the warmup phase (after the warmup up time step $t$, respectively). 
Recall $\blue{\barV_\tau} := \sum_{s=1}^{\tau} x_{s} x_{s}^\T$ is the covariance matrix. 
Let $\eta_{\le \tau} = (\eta_{1},\ldots, \eta_\tau)^{\T}$ and $\eta_{>\tau} = (\eta_{\tau+1},\ldots, \eta_t)^{\T}$.
For the following rounds after $\tau$, define the MLE estimator \begin{equation}
\label{estimator}
  \hth_{t,0} = \arg \min_\th \fr12 \sum_{s=\tau+1}^{t} (x_s^\T \th - y_s)^2 + \fr{1}{2}\|\th - \hth_{\tau,0}\|^2_{\barV_{\tau}}.
\end{equation}
Fix $x\in\RR^d$.
The optimality condition can be written as
\begin{align*}
  \barV_t(\hth_{t,0} - \th^*) = \del{X_{>\tau}^\T X_{>\tau} + \barV_\tau}(\hth_{t,0} - \th^*) = X_{>\tau}^\T \eta_{>\tau} -  \barV_\tau(\th^* - \hth_{\tau,0}) 
\end{align*}

In fact, this optimality condition for Estimator (\ref{estimator}) is exactly that of using the whole data. That is, Estimator (\ref{estimator}) has the same optimality condition as Estimator (\ref{eq:mle_fixxed_arm}).

Fix $x\in\RR^d$, which we will specify later on.
Then, 
  \begin{equation} \label{eq: optimality_con}
      x^\T(\hth_{t,0} - \th^*) 
  = x^\T \barV_t^{-1} X_{>\tau}^\T \eta_{>\tau} - x^\T \barV_t^{-1}\barV_\tau(\th^* - \hth_{\tau,0}).
  \end{equation}

Consider the first term on the RHS of Equation (\ref{eq: optimality_con}). We will bound it using \citet[Theorem 1]{ay11improved}. We will use the high probability bound,
\begin{equation}\label{eq: HPB_1}
\PP\left( \forall t\geq 0: \|X_{>\tau}^\T \eta_{>\tau} \|_{\barV_t^{-1}} \leq R\sqrt{ \ln\del{\fr{|\barV_t|}{|\barV_\tau|}\fr1{\dt^2} } } \right)\geq 1-\delta.
\end{equation}
Therefore, with probability at least $1-\delta$ for all $t\geq \tau+1,$
 \begin{align*}  
 x^\T \barV_t^{-1} X_{>\tau}^\T \eta_{\le \tau}&\le \|x\|_{\barV_t^{-1}} \| X^{\T}_{>\tau}\eta_{>\tau}\|_{\barV_t^{-1}}\\
 &\sr{(a)}{\le}\|x\|_{\barV_t^{-1}} R\sqrt{\ln(\fr{|\barV_t|}{|\barV_\tau|\dt^2} )}.
 \end{align*}
 where $(a)$ is due to the high probability bound in Equation (\ref{eq: HPB_1}).

One can further bound the adaptive design by 
\begin{align*}
  \ln\del{\fr{|\barV_t|}{|\barV_\tau|\dt^2}}
  \le \ln\del{\fr{|\barV_\tau^{-1/2} \barV_t \barV_\tau^{-1/2}|}{\dt^2}}
  &\le d\ln\del{\fr1d \tr(\barV_\tau^{-1/2} \barV_t \barV_\tau^{-1/2}) \fr{1}{\dt^2}}
  \\&= d\ln\del{ \fr1d \sum_{s=1}^t \tr(\barV_\tau^{-1/2} x_s x_s^\T \barV_\tau^{-1/2}) \fr{1}{\dt^2}}
  \\&\sr{(b)}{\le} d\ln\del{ \fr {t} d \fr{1}{\dt^2}}
\end{align*}
where $(b)$ is due to the fact that $\max_{x\in\cX}\tr(\barV_\tau^{-1/2} x x^\T \barV_\tau^{-1/2})=\max_{x\in\cX}  \|x\|^2_{\barV_\tau^{-1}} \le 1$.

 Consider the second term on the RHS of Equation (\ref{eq: optimality_con}). We use \citet[Theorem 1]{ay11improved}. We have the high probability event 
 \begin{equation}\label{eq: HPB_2}
 \PP\left( \forall t\geq 0: \|X_{\le\tau}^\T \eta_{\le\tau} \|_{[(1 + \sigma)\barV_\tau]^{-1}} \leq R\sqrt{ \ln\del{\fr{|(1+\sigma)\barV_\tau|}{|\sigma \barV_\tau|}\fr1{\dt^2} } } \right)\geq 1-\delta.
 \end{equation}
 Therefore with probability at least $1-\dt$ for all $t\geq \tau+1$
\begin{align*}
  - x^\T \barV_t^{-1} \bar{V}_\tau(\th^* - \hth_{\tau,0})&= - x^\T \barV_t^{-1} \barV_\tau(\th^* - \barV_\tau^{-1} X_{\le \tau}^\T X_{\le \tau} \th^* - \barV_\tau^{-1} X_{\le \tau}^\T \eta_{\le \tau})
  \\&= - x^\T \barV_t^{-1} \barV_\tau(-\barV_\tau^{-1} X_{\le\tau}^\T \eta_{\le\tw})
  \\&=  x^\T \barV_t^{-1} X_{\le \tau}^\T \eta_{\le\tau}
  \\&\le \|x\|_{\barV_t^{-1}} \|X_{\le \tau}^\T \eta_{\le \tau} \|_{\barV_t^{-1}} 
  \\&\le  \|x\|_{\barV_t^{-1}} \|X_{\le\tau}^\T \eta_{\le\tau} \|_{\barV_\tau^{-1}} 
  \\&= \|x\|_{\barV_t^{-1}} \sqrt{1+\sigma}\|X_{\le\tau}^\T \eta_{\le\tau} \|_{[(1 + \sigma)\barV_\tau]^{-1}}  \tag{for any $\sigma > 0$}
  \\&\sr{(c)}{\le} \|x\|_{\barV_t^{-1}} \sqrt{1+\sigma} R\sqrt{ \ln\del{\fr{|(1+\sigma)\barV_\tau|}{|\sigma \barV_\tau|}\fr1{\dt^2} } }
  \\&\le \|x\|_{\barV_t^{-1}} \sqrt{1+\sigma} R\sqrt{ d\ln\del{\fr{1+\sigma}{\sigma} } + 2\ln(1/\dt) }.
\end{align*}
where $(c),$ is due to the high probability bound in Equation (\ref{eq: HPB_2}).

We now choose $\sigma = 1$, which gives us 
\[\sqrt{1+\sigma} R\sqrt{ d\ln\del{\fr{1+\sigma}{\sigma} } + 2\ln(1/\dt) }\le R \sqrt{2\ln(2) d + 4\ln(1/\dt)}.\]
Finally, we combine the two high probability bounds, Equation's (\ref{eq: HPB_1}) and (\ref{eq: HPB_2}) to bound the RHS of Equation (\ref{eq: optimality_con}). Then we have with probability at least $1-2\dt$,
\begin{align*}
   | x^\T(\hth_{t,0} - \th^*) |  \leq \|x\|_{\barV_t^{-1}}R \del{\sqrt{\ln\del{\fr{|\barV_t|}{|\barV_\tau|\dt^2}}} + \sqrt{2\ln(2)d + 4\ln(1/\dt)}}
\end{align*}
Choosing $x$ we obtain our final result. That is, set $x=\barV_t(\hth_{t,0}-\th^*)$. This implies  $| x^\T(\hth_{t,0} - \th^*) |= \|\hth_{t,0}-\th^*\|_{\barV_t}^2$ and  $\|\barV_t(\hth_{t,0}-\theta^*)\|_{\barV_{t}^{-1}}=\|\hth_{t,0}-\th^*\|_{\barV_t}$.Then with probability at least $1-2\delta$,
\begin{align*}
    \|\hth_{t,0} - \th^*\|_{\barV_{t}}\leq R \del{\sqrt{\ln\del{\fr{|\barV_t|}{|\barV_\tau|\dt^2}}} + \sqrt{2\ln(2)d + 4\ln(1/\dt)}}
\end{align*}

\end{proof}

\begin{proof}[\textbf{Proof of Theorem~\ref{thm:olsoful}}]

Recall $\om = |\cX_0|$ and $\tau$ is the length of the total warmup procedure in Algorithm \ref{alg:olsoful}. Define $r_t := \max_{x\in\cX} \la x, \th^*\ra - \la x_t, \th^*\ra $ and recall $\Reg_T=\sum_{t=1}^T r_t$. Then,
\begin{align*}
  \Reg_T&= \sum_{t=1}^\tau r_t + \sum_{t=\tau+1}^T r_t
  \\&\sr{(a)}{\le} 2dS_* + S_* \cO(d\ln(d)) + \sum_{t=\tau+1}^T  r_t
\end{align*}
where $(a)$ is due to Lemma \ref{lem: warmup}.
It remains to bound the last term. On the high probability event in Theorem \ref{thm: FAS conf ellip},
\[C_t=\left\{\th^*\in\mathbb{R}^d:\ 
    \|\hth_{t,0} - \th^*\|_{\barV_{t}}\leq \sqrt{\beta_{t}^\OLS}\right\}\] we bound the following,
\begin{align*}
  \sum_{t=\tau+1}^T  r_t
  &= \sum_{t=\tau+1}^T \del{ \max_{x\in\cX} \la x, \th^*\ra - \la x_t, \th^*\ra} 
  \\&\le \sum_{t=\tau+1}^T \del{ \la  x_t,\til\theta_{t,0}  \ra - \la x_t, \th^*\ra} 
  \\&= \sum_{t=\tau+1}^T \del{ \la  x_t, \til\theta_{t,0} -\hth_{t,0}  \ra - \la x_t, \th^* - \hth_{t,0}\ra} 
  \\&\le \sum_{t=\tau+1}^T  \|x_t\|_{\barV_{t-1}^{-1}} \del{ \|\til\th_{t-1,0} - \hth_{t-1,0}\|_{\barV_{t-1}} +  \|\th^* - \hth_{t-1,0}\|_{\barV_{t-1}}}
  \\&\sr{(b)}{\le} \sum_{t=\tau+1}^T  \|x_t\|_{\barV_{t-1}^{-1}} 2 \sqrt{\beta^{\OLS}_{t-1}}
  \\&\le 2\sqrt{T \sum_{t=\tau+1}^T \|x_t\|^2_{\barV_{t-1}^{-1}} \beta_{t-1}^{\OLS} }
  \\&\le 2\sqrt{T \sum_{t=\tau+1}^T  2\ln(1+\|x_t\|^2_{\barV_{t-1}^{-1}}) \beta^{\OLS}_{t-1} }
  \\&= 2\sqrt{2} \sqrt{T \ln\del{\fr{|\barV_T|}{|\barV_\tau|} }  \beta^{\OLS}_{t-1} }
  \\&\leq \mathcal{O}\del{R \sqrt{dT\ln(\frac{T}{d}) \ln(\frac{T}{d\dt})}} 
\end{align*}
where $(b)$ uses the high probability event of $C_t$. Hence the desired regret bound holds with at least probability $1-2\dt.$ 
\end{proof}

\subsection{Computationally Efficient Version of Algorithm \ref{alg:warmup_psuedo}}

As discussed in Section 5, we present a computationally efficient version of Algorithm \ref{alg:warmup_psuedo}, which is displayed in Algorithm~\ref{alg:warmup}. 
That is, for choosing the direction, $b_i$ 
by utilizing the Gram-Schmidt Process to find a vector that lives in the orthogonal complement of $\Psi$. 
It is easy to verify that the computational complexity of Algorithm~\ref{alg:warmup} is  $\cO(d^2|\cX|)$.
\begin{algorithm}
  \caption{Computationally efficient BH algorithm~\cite{betke93approximating}}
  \label{alg:warmup}
  \begin{algorithmic}
    \STATE \textbf{Input:} the set of points $\cX \subset \RR^d$ with $|\cX| = K$.
    \STATE If $K \le 2d$, then $\cX_0 \larrow \cX$. Return.
    \STATE $\Psi \larrow\{0\}$, $\cX_0 \larrow \emptyset$, $i \larrow 0$, $v_0 \larrow (0,\ldots,0)^\T\in\RR^d$.
    \WHILE{$\RR^d \setminus \Psi\neq \emptyset$}
    \STATE $i \larrow i + 1$
    \IF {$i=1$ }
    \STATE Set $b_i = e_1$
    \ELSE
    \STATE Set $v^\perp_{i-1} = v_{i-1} - \sum_{j\le i-2} \fr{\la v^\perp_j, v_{i-1} \ra}{\la v_j^\perp,  v_j^\perp \ra}v_j^\perp$
    \STATE Set $b_i = e_i - \sum_{j\le i-1} \fr{\la v^\perp_j, e_i\ra}{\la v_j^\perp, v_j^\perp \ra}v_j^\perp$
    \ENDIF
    \STATE $p \larrow \arg \max_{x\in\cX} \la b_i, x \ra$; $\cX_0 \larrow \cX_0 \cup \{p\}$ 
    \STATE $q \larrow \arg \min_{x\in\cX} \la b_i, x \ra$; $\cX_0 \larrow \cX_0 \cup \{q\}$ 
    \STATE $v_i \larrow p - q$
    \STATE $\Psi \larrow \mathsf{Span}(\Psi, \{v_i\})$
    \ENDWHILE
  \end{algorithmic}
\end{algorithm}

%

\end{document}